\documentclass[runningheads]{llncs}

\usepackage{eccv}

\usepackage{amsmath,amssymb}
\newcommand{\E}{\mathbb{E}}

\usepackage{eccvabbrv}

\usepackage{graphicx}
\usepackage{booktabs}
\usepackage{algorithm}
\usepackage{algpseudocode}
\usepackage{thmtools}

\usepackage[accsupp]{axessibility}  

\usepackage{hyperref}

\usepackage{orcidlink}
\usepackage{minitoc}

\begin{document}
\dominitoc

\title{Optimizing Diffusion Priors in Image Reconstruction from a Single Observation} 

\titlerunning{Optimizing Diffusion Priors with a Single Observation}

\author{Frederic Wang\inst{1} \and
Katherine L. Bouman\inst{1}}

\authorrunning{F.~Wang and K.L.~Bouman}

\institute{Caltech, Pasadena CA 91125, USA }

\maketitle

\begin{abstract}
  While diffusion priors generate high-quality posterior samples across many inverse problems, they are often trained on limited training sets or purely simulated data, thus inheriting the errors and biases of these underlying sources. Current approaches to finetuning diffusion models rely on a large number of observations with varying forward operators, which can be difficult to collect for many applications, and thus lead to overfitting when the measurement set is small. We propose a method for tuning a prior from only a single observation by combining existing diffusion priors into a single product-of-experts prior and identifying the exponents that maximize the Bayesian evidence. 
  We validate our method on real-world inverse problems, including black hole imaging, where the true prior is unknown {\it a priori}, and image deblurring with text-conditioned priors. We find that the evidence is often maximized by priors that extend beyond those trained on a single dataset. By generalizing the prior through exponent weighting, our approach enables posterior sampling from both tempered and combined diffusion models, yielding more flexible priors that improve the trustworthiness of the resulting posterior image distribution.
  \keywords{Computational imaging \and Diffusion models \and Product-of-experts \and Posterior sampling \and Model selection}
\end{abstract}

\section{Introduction}
\label{sec:intro}

In under-constrained inverse imaging problems, our reconstructions are only as good as the assumptions we make. These assumptions, often encoded as prior probability distributions $p(x)$, describe what we believe plausible image solutions should look like before seeing the data. Yet in practice, the true prior distribution that governs nature’s variability is never known. Instead, we rely on approximations: models that are convenient, interpretable, or empirically motivated, but almost certainly wrong in subtle ways. These mismatches can bias the resulting reconstructions, leading us to underestimate uncertainty or to impose features that reflect our model more than reality. 

Diffusion models \cite{song2020score, ho2020denoising} are state-of-the-art data-driven priors for imaging inverse problems, but existing posterior sampling methods \cite{chung2022diffusion, song2023pseudoinverse,coeurdoux2024plug, li2024decoupled, zhu2023denoising, wu2024principled, zhang2025improving} treat the pretrained diffusion models as if they have exactly captured the true prior. However, many diffusion models used for scientific priors are trained on images from simulations or highly limited training data, inheriting potentially incorrect physical assumptions about the world, simulation constraints, missing coverage of rare phenomenon, and a variety of other human biases.  Strong, specific priors can yield high-quality reconstructions but risk overconfidence and bias, while weak, general priors produce safer but less informative results. 

One approach to addressing an incorrect data-driven prior involves pooling reconstructions from many different priors and finding common structures \cite{feng2024event}. Another approach directly learns or fine-tunes a prior using many independent corrupted observations \cite{gao2023image, bai2024expectation, rozet2024learning, barco2025tackling}; however, with only one highly ill-posed measurement, directly finetuning or learning a prior introduces too many degrees of freedom, potentially causing the model to overfit to the measurement noise as well as null space components of the forward operator. Furthermore, these fine-tuning approaches are less interpretable, as we are not constraining our prior to a predetermined set of physics or assumptions. 

In the classical inverse problem literature, the priors are often simple analytic functions, such as total variation or $\ell_1$ norm. While these generic priors are usually not ideal, we can still recover reasonable solutions by carefully tuning their regularization weights, preventing them from dominating the reconstruction or from being too uninformative. Under a Bayesian formulation~\cite{stuart2010inverse}, these regularization weights can be interpreted as exponents that control the temperature of each prior distribution, and using multiple regularizers corresponds to a product-of-experts prior. Under this framework, the Bayesian evidence also provides a principled metric to choose between competing priors. By limiting ourselves to optimizing just these regularization weights, we avoid any issues of over-fitting to measurement noise or the forward operator null space.

\textbf{Our contributions:} We provide a principled approach to handling misspecified data-driven priors for inverse problems by extending product-of-experts sampling of diffusion models to posterior sampling. We also present two principled strategies to choose the exponents that maximize the Bayesian evidence: (1) an \textit{evidence scalar field} strategy, where evidence gradients are computed on a fixed grid and used to solve an optimization problem approximating the underlying scalar, which can be used to identify the global maximizer, and (2) a generalized expectation-maximization (EM) method that finds exponents that locally maximize the evidence. 
By continuously bridging between different diffusion model priors to find the right balance, we can identify an optimal degree of prior influence -- strong enough to stabilize the reconstruction, without introducing bias that conflicts with the data. Crucially, this balance can be inferred from a single measurement (i.e., a single image observed through a single forward operator).

We validate our posterior sampling and exponent selection methods on a toy Gaussian model where the evidence for all product-of-expert combinations can be computed analytically. We find that our reconstructed \textit{evidence field} closely matches the ground truth field, despite being evaluated on a coarse 11x11 grid. We also find that over multiple EM iterations, the prior exponents move in the direction that increases analytic evidence before reaching the optimum. 
We also validate our results on black hole imaging, where the true prior is unknown; priors trained on simulated black hole images generated from physics-based models~\cite{mizuno2022grmhd} have been successfully used to reconstruct real M87* observations~\cite{wang2026sample, wu2024principled, zheng2025inversebench, feng2024event}. Despite simulated black hole priors achieving higher evidence on real M87* data than priors trained on generic space images \cite{wang2026sample, alamimam2024flare}, we find that combining these models as a product-of-experts yields a prior with even higher evidence than the black hole prior alone, suggesting that a prior extending beyond those trained purely on simulations may better explain the observations.
Finally, we perform deblurring on natural images, where the true prior is unknown and candidate priors are proposed using different text conditions in Stable Diffusion \cite{rombach2022high}. Our method is able to find the right balance between useful text conditions while eliminating incorrect conditions, leading to both a semantic interpretation of the underlying image as well as better  reconstruction of semantic features. 

The proposed approach serves two purposes. First, it yields reconstructions that are more trustworthy by adaptively balancing prior strength based on data evidence. Second, it can reveal how much prior information is truly required; in other words, the weakest level of regularization that still achieves high evidence under the measurement. In doing so, our method reframes prior selection not as a fixed choice but as an inference problem in its own right -- one that quantifies how much structure we can safely assume, and how much the observation alone can tell us.







\section{Background}

\subsection{Diffusion Models}
Given an image or latent $x_0 \sim p(x_0)$, the variance exploding (VE) forward diffusion process at time $t \in [0, T]$ is given by:
\begin{align}
    x_t = x_0 + \sigma_t z_t, \quad z_t \sim \mathcal{N}(0, I)
\end{align}
where $\sigma_t$ is the noise schedule. We can sample from the diffusion model by discretizing the reverse SDE (Eq. \ref{eq:sde}) or
probability flow ODE (PF-ODE, Eq. \ref{eq:pf-ode}):
\begin{align}
    d x_t
&= - 2 \sigma_t \sigma_t' \nabla_{x_t}\log p(x_t) dt
 \;+\;  \sqrt{2 \sigma_t \sigma_t'}  dw_t  \label{eq:sde}\\
     d x_t
&= -\sigma_t \sigma_t' \nabla_{x_t}\log p(x_t) dt \label{eq:pf-ode}
\end{align}
where  $w_t$ is Brownian motion and
$\nabla_{x_t} \log p(x_t)$ is learned. We use the VE process and PF-ODE throughout this paper for ease of notation, but our method can be extended to other diffusion noise schedules or the reverse SDE formulation sampling under a reparameterization. Given a noisy image $x_t \sim p(x_t)$, Tweedie's formula gives us the posterior mean: 
\begin{align}
    \mathbb{E}[x_0 \mid x_t]
= x_t + \sigma_t^2 \, \nabla_{x_t} \log p(x_t).
\end{align}

Many approaches for combining or tempering pretrained diffusion priors have been proposed. Classifier-free guidance (CFG) \cite{ho2022classifier} strengthens conditioning for a desired condition $c$ during sampling using a combination of the conditional and unconditional score $\gamma \nabla \log p(x \mid c) + (1-\gamma) \nabla \log p(x)$, where $\gamma$ is a guidance strength; however, as this does not correspond to any forward diffusion process \cite{du2023reduce}, both the marginals and underlying sampled distribution are ill-defined, making it not a useful or interpretable prior in practice. Approximate temperature sampling from diffusion models by approximating the prior as a Gaussian mixture \cite{xu2025temporal} has been studied, but this generates noisy samples when the temperature is below $1$, limiting its usefulness for posterior sampling. Another method to sample from product or tempered diffusion models is to perform annealed MCMC sampling in a predictor-corrector fashion along a set of predetermined marginals \cite{du2023reduce}. Feynman-Kac correctors \cite{skreta2025feynman} have been used as resampling weights for Sequential Monte Carlo (SMC) \cite{del2006sequential} to sample from these same distributions. The Ito density estimator \cite{skreta2024superposition} has also been used to sample from the AND and OR distributions of two models, although this does not have a closed form target distribution like previous approaches.


\subsection{Solving Inverse Problems with Diffusion}

Given a ground truth image or signal $x$, forward operator $A(\cdot)$, and measurement noise $\varepsilon$, we can obtain a measurement $y = A(x) + \varepsilon$. Classical inverse problem solvers minimize:
\begin{align}
    \min_x \|y - A(x)\|^2 + \lambda R(x)
\end{align}
where $\lambda$ is a tunable regularization strength and $R(x)$ is a handcrafted regularizer. From a Bayesian viewpoint, the term $\|y - A(x)\|^2$ is proportional to a Gaussian log-likelihood $\log p(y \mid x)$ whereas $R(x)$ is proportional to the negative log-likelihood of some prior $-\log p(x)$. Therefore, tuning $\lambda$ means that our prior is actually $p(x)^\lambda$, which serves as a principled way to control the strength of the prior. If multiple regularizers are used with regularization weights $\lambda_1, \lambda_2, \dots$, then the equivalent Bayesian prior is a product-of-experts $p_1(x)^{\lambda_1}p_2(x)^{\lambda_2} \cdots$.

Diffusion models can be used as data-driven priors for posterior sampling. Early approaches to diffusion posterior sampling \cite{chung2022diffusion, song2023pseudoinverse, boys2023tweedie} guide the sampling trajectory with approximations of $\nabla_{x} \log p(y \mid x_t)$. Recently, posterior samplers that anneal along specified marginals using invariant transition kernels \cite{zhu2023denoising, wu2024principled, zhang2025improving, wu2023practical}  have led to more accurate posterior samplers.

Another related line of work uses measurements to train or finetune a diffusion model. Ambient diffusion methods \cite{daras2023ambient, aali2024ambient, daras2024consistent} train a diffusion model using many corrupted observations $\{y_i\}$ with different forward operators $\{A_i\}$. Expectation-maximization methods can learn a prior from only noisy observations and an initial diffusion model prior \cite{bai2024expectation,rozet2024learning,barco2025tackling}. However, these works assume many observations while we target situations where only a single observation may be available.
EM methods have also been used to optimize the input condition in latent diffusion models for solving inverse problems \cite{chung2023prompt}, offering an alternative approach of adapting the prior for latent diffusion models.

\section{Method Overview}\label{sec:overview}

Given pretrained diffusion models on priors ${p_i(x)}, i=1,\dots,n$, a measurement $y$, and corresponding likelihood $\log p(y \mid x)$, our goal is to determine exponents $a_i$ such that the normalized product prior $\pi_a(x) \propto \prod_{i=1}^n p_i(x)^{a_i}$ maximizes the Bayesian evidence $\log p_a(y) = \log \int p(y \mid x)\pi_a(x)dx$. To this end, we develop two complementary approaches that both rely on estimating gradients of the evidence with respect to the exponents $a_i$, which we obtain using unconditional and posterior samples from the product prior. In the first approach, we evaluate these gradients on a grid of exponent values and integrate them via constrained least squares to recover the full evidence field. Because this can be computationally expensive, we also introduce a second method: a generalized expectation–maximization (EM) procedure that directly optimizes the exponents and converges to a local maximizer of the evidence. Mathematical derivations and proofs are provided in the supplement.

\subsection{Computing the Gradient of the Evidence}\label{sec:method_gradient}

In this section, we describe how to compute gradients of the Bayesian evidence with respect to the exponents $a_i$ in a product-of-experts diffusion prior. Our procedure consists of three steps: sampling from the product prior, sampling from the corresponding posterior for a given observation, and using these samples to estimate the evidence gradients.

\subsubsection{Unconditional sampling}

In order to sample from the product prior $\pi_a(x_0) \propto \prod_{i=1}^n p_i(x_0)^{a_i}$, we follow  \cite{du2023reduce, zhang2025product} and perform annealed MCMC sampling along the marginals $\pi_a(x_t) \propto \prod_{i=1}^n p_i(x_t)^{a_i}$ in a predictor-corrector fashion. The predictor step typically uses one step of the PF-ODE \cite{zhang2025product} or reverse SDE \cite{du2023reduce}, while the corrector step mixes using the score $\nabla \log \pi_a(x_t) = \sum_{i=1}^n a_i \nabla \log p_i(x_t)$ along with sampling methods such as Unadjusted Langevin Dynamics (ULA). This score can be obtained easily from pretrained diffusion models on $p_i$. 

Under the commonly adopted Gaussian approximation of $p_i(x_0 \mid x_t)$ \cite{boys2023tweedie, zhang2025improving}, we find that the marginals $\pi_a(x_t)$ correspond to an effective noise level $\sigma_{t,eff}$ that can differ from the original diffusion noise $\sigma_t$.
This perspective provides us with a more accurate predictor step and thus fewer required mixing steps as well as a surrogate denoiser mean for our posterior sampling method. It also justifies a commonly used diffusion sampling heuristic: when sampling from a tempered prior $\pi(x) = p(x)^a$, the reverse SDE with effective noise reduces to the standard SDE update with a scaled noise term, providing a principled explanation for the common heuristic used to sample from sharper or broader distributions \cite{geffner2025proteina}.

\begin{restatable}{proposition}{effectivenoise}\label{prop:effective-noise}
    Assume that for all models, $p_i(x_0 \mid x_t) \approx \mathcal{N}(\mu_{p_i}(x_t), \Sigma_t)$, where $\mu(x_t)$ comes from each diffusion model and $\Sigma_t$ is some covariance. Define the surrogate conditional $\pi_a(x_0 \mid x_t)\ \propto\ \prod_{i=1}^n p_i(x_0 \mid x_t)^{a_i}$. Given a target prior $\pi_a(x_0) \propto \prod_{i=1}^n p_i(x_0)^{a_i}$ with $\sum_i a_i>0$ and an intermediate sample $x \sim \pi_a(x_t) \propto \prod_{i=1}^n p_i(x_t)^{a_i}$, we can create a surrogate denoiser mean by using Tweedie's formula at an \textit{effective noise} $\sigma_{t,eff} = \frac{\sigma_t}{\sqrt{\sum_{i=1}^n a_i}}$:
    \begin{align}
        \E_{\pi_a} [x_0 \mid x_t] \approx  x_t +\sigma_{t,eff}^2\,\nabla_{x_t}\log \pi_a(x_t). \label{eq:pi-tweedie}
    \end{align}
\end{restatable}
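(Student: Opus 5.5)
The plan is to compute the mean of the surrogate conditional $\pi_a(x_0\mid x_t)$ directly from the Gaussian assumption, then rewrite it in Tweedie form using each model's own Tweedie identity.

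\textbf{Step 1 (mean of the surrogate conditional).} Since each factor is $\mathcal{N}(\mu_{p_i}(x_t),\Sigma_t)$ with a shared covariance, raising it to the power $a_i$ multiplies its precision by $a_i$. The product of these factors is therefore Gaussian (in the variable $x_0$), with precision $(\sum_i a_i)\Sigma_t^{-1}$. This precision is positive definite precisely because $\sum_i a_i>0$, which is where the hypothesis enters; individual $a_i$ may be negative. Completing the square gives
\begin{align}
\E_{\pi_a}[x_0\mid x_t] \approx \frac{\sum_i a_i\,\mu_{p_i}(x_t)}{\sum_i a_i}.
\end{align}
Here we interpret the left-hand side of \eqref{eq:pi-tweedie} as the mean under this surrogate conditional.

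\textbf{Step 2 (per-model Tweedie).} For each model, Tweedie's formula under the VE process gives $\mu_{p_i}(x_t)=x_t+\sigma_t^2\nabla_{x_t}\log p_i(x_t)$. Substituting into Step 1, and writing $A=\sum_i a_i$, the weighted average of the $x_t$ terms returns $x_t$. The remaining term is
\begin{align}
\frac{\sigma_t^2}{A}\sum_i a_i\nabla\log p_i(x_t).
\end{align}

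\textbf{Step 3 (identify the product score).} Because $\pi_a(x_t)\propto\prod_i p_i(x_t)^{a_i}$ and the normalizing constant does not depend on $x_t$, we have $\nabla_{x_t}\log\pi_a(x_t)=\sum_i a_i\nabla\log p_i(x_t)$. Combining with Step 2 yields $x_t+\frac{\sigma_t^2}{A}\nabla\log\pi_a(x_t)$, which is \eqref{eq:pi-tweedie} with $\sigma_{t,eff}^2=\sigma_t^2/A$.

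\textbf{Main obstacle.} The delicate point is conceptual rather than computational. The true conditional $\pi_a(x_0\mid x_t)$ induced by the product marginals is not in general the normalized product of the $p_i(x_0\mid x_t)^{a_i}$: the marginal of a product is not the product of marginals. The statement therefore has to be read, as it is phrased, as a statement about the \emph{surrogate} conditional, and the ``$\approx$'' absorbs both this surrogate choice and the Gaussian approximation.

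To make this precise, I would state explicitly that the $x_t$-dependent normalizer of the surrogate is discarded. I would also remark on when the result is exact: if each $p_i$ is Gaussian with a common covariance, $\pi_a(x_0)$ is Gaussian with covariance scaled by $1/A$. Its forward process at noise $\sigma_t$ then behaves like the original process at effective noise $\sigma_t/\sqrt{A}$ after rescaling, which also motivates the name ``effective noise.''
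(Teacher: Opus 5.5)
Your proposal is correct and follows the paper's proof essentially step for step. You complete the square to get the surrogate conditional $\mathcal{N}(\sum_i a_i\mu_{p_i}(x_t)/\sum_i a_i,\ \Sigma_t/\sum_i a_i)$, which is normalizable because $\sum_i a_i>0$; then you substitute each model's Tweedie identity and recognize $\sum_i a_i\nabla\log p_i(x_t)$ as $\nabla_{x_t}\log\pi_a(x_t)$.
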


 We can use this effective noise intuition to improve the predictor step. Given $x_t \sim \pi_a(x_t)$, existing predictors for $x_{t-1}$ discretize the PF-ODE or reverse SDE from $\sigma_t$ to $\sigma_{t-1}$. Adjusting this to account for the effective noise  and discretizing from $\sigma_{t,eff}$ to $\sigma_{t-1,eff}$ gives:
\begin{align}
d x_t
&= -\sigma_{t,eff} \sigma_{t,eff}' \nabla_{x_t}\log \pi_a(x_t) dt = -\frac{\sigma_t \sigma_t'}{\sum_{i=1}^n a_i} \nabla_{x_t}\log \pi_a(x_t) dt
\end{align}

Furthermore, consider the product distribution at high noise. If $p_i(x_T) \approx \mathcal{N}(0, \sigma_T^2)$ for all models $i$, then the initial distribution $\pi_a(x_T) \propto \prod_{i=1}^n p_i(x_T)^{a_i} \approx \mathcal{N}\left(0, \frac{\sigma_T^2}{\sum_{i=1}^n a_i} \right)$, lending credence to our effective noise interpretation
despite the Gaussian approximations of $p_i(x_0 \mid x_t)$ being poor at higher noise. As a result, we also initialize our reverse diffusion with the effective noise. See Algorithm \ref{alg:uncond-mcmc}.

\subsubsection{Posterior sampling}

As we already perform annealed MCMC for unconditional sampling, we can naturally extend to posterior sampling by annealing along any path of intermediate marginals that approaches $\pi_a(x_0) p(y \mid x_0)$ as $t\to 0$, as long as there is sufficient mixing at each annealing iteration. While the path $\pi_a(x_t) p(y \mid x_t)$ is ideal for efficient mixing, the score $\nabla \log p(y \mid x_t)$ is intractable. To address this, \cite{wu2023practical, zhu2409think} proposes a twisted annealing path $p(x_t) p(y \mid \mu_p(x_t))^{\beta_t}$ to generate posterior samples for a single prior $p$, which we extend to $\pi_a(x_t) p(y \mid \mu_\pi(x_t))^{\beta_t}$ for the product-of-experts case. We use Eq. \ref{eq:pi-tweedie} as a surrogate denoiser mean in the product prior case, which is asymptotically accurate as $t \to 0$ as the Gaussian approximation becomes exact in that limit. The likelihood scaling schedule $\beta_t$ with $\beta_T = 0, \beta_0 =1$ helps account for uncertainty in the posterior $\pi_a(x_0 \mid x_t)$, which is ignored when only using the denoiser mean.

The sampling path we use was originally designed to be used with Sequential Monte Carlo (SMC) resampling to improve mixing. In practice, we find 10-20 mixing iterations without SMC to be enough to generate diverse posterior samples with proper data fit. This also avoids the extra tuning burden of SMC and its potential cost to sample diversity when generating many particles in parallel \cite{wu2023practical, skreta2025feynman}.
The full posterior sampling method can be seen in Algorithm \ref{alg:uncond-mcmc}.

\begin{algorithm}[t]
\caption{Unconditional and posterior sampling of $\pi(x_0) \propto \prod_{i=1}^n p_i(x_0)^{a_i}$}
\label{alg:uncond-mcmc}
\begin{algorithmic}[1] 
\Require annealing steps $T$, mixing steps $M$, exponents $\{a_i\}$, likelihood scaling $\beta_t$
\State Initialize $x_T \sim \mathcal{N}(0, \frac{\sigma_T^2}{\sum_i a_i})$
\State $\nabla \log \pi(x_T) \gets \sum_i a_i \nabla \log p_i(x_T)$
\For{$t = T-1,...,0$}
    \State $\hat{x}_{t} \gets  x_{t+1} + 0.5 \frac{\sigma_{t+1}^2 - \sigma_{t}^2}{\sum_i a_i} \nabla \log \pi(x_{t+1})$
    \For{$m= 1,\dots,M$}
        \State $\nabla \log \pi(\hat{x}_t) \gets \sum_i a_i \nabla \log p_i(\hat{x}_t)$
        \If{unconditional}
            \State Langevin step on $\hat{x}_{t}$ using $\nabla \log \pi(\hat{x}_t)$
        \Else
            \State $\mu \gets \hat{x}_t + \frac{\sigma_t^2}{\sum a_i} \nabla \log \pi(\hat{x}_t).$
            \State Langevin step on $\hat{x}_{t}$ using $\nabla \log \pi(\hat{x}_t) + \beta_t \nabla \log p(y \mid \mu(\hat{x}_t))$
        \EndIf
    \EndFor
    \State $x_t \gets \hat{x}_{t}$
\EndFor
\State \Return $x_0$
\end{algorithmic}
\end{algorithm}

\subsubsection{Computing the evidence gradient}\label{sec:M-step}
We use both our unconditional and posterior samples to construct an unbiased Monte Carlo estimator for the evidence gradient: 

\begin{restatable}{proposition}{msteppixel}\label{prop:m-step}
Given prior $\pi_a(x) \propto \prod_{j=1}^n p_j(x)^{a_j}$, measurement $y$, and likelihood $p (y \mid x)$, the evidence gradient w.r.t. $a_i$ can be written as:
\begin{align}
    \frac{\partial}{\partial a_i} \log p_a(y) = \E_{x \sim \pi_a(x \mid y)}[\log p_i (x)] - \E_{x \sim \pi_a(x)}[\log p_i (x)].
\end{align}
\end{restatable}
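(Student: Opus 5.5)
We treat the prior as an exponential family in the natural parameters $a=(a_1,\dots,a_n)$ with sufficient statistics $T_i(x)=\log p_i(x)$. Write the unnormalized prior as $\tilde\pi_a(x)=\prod_j p_j(x)^{a_j}=\exp\bigl(\sum_j a_j \log p_j(x)\bigr)$ and its normalizer as $Z(a)=\int \tilde\pi_a(x)\,dx$, so that $\pi_a=\tilde\pi_a/Z(a)$. Then
\begin{align}
    \log p_a(y) = \log \int p(y\mid x)\,\tilde\pi_a(x)\,dx \;-\; \log Z(a) =: \log Z_y(a) - \log Z(a).
\end{align}
The strategy is to differentiate each log-normalizer separately. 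Each derivative is the standard log-partition identity, so the claimed formula is the difference of two expectations.

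For the second term, $\partial_{a_i}\tilde\pi_a(x)=\log p_i(x)\,\tilde\pi_a(x)$. Differentiating under the integral gives
\begin{align}
    \partial_{a_i}\log Z(a)=\frac{1}{Z(a)}\int \log p_i(x)\,\tilde\pi_a(x)\,dx=\E_{x\sim\pi_a(x)}[\log p_i(x)].
\end{align}
The first term works the same way. The likelihood $p(y\mid x)$ does not depend on $a$, so
\begin{align}
    \partial_{a_i}\log Z_y(a)=\frac{\int \log p_i(x)\,p(y\mid x)\,\tilde\pi_a(x)\,dx}{\int p(y\mid x)\,\tilde\pi_a(x)\,dx}.
\end{align}
The weight $p(y\mid x)\tilde\pi_a(x)/Z_y(a)$ is exactly the posterior $\pi_a(x\mid y)$, since the factor $Z(a)$ cancels between numerator and denominator. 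Hence this derivative equals $\E_{x\sim\pi_a(x\mid y)}[\log p_i(x)]$, and subtracting the two results yields the proposition.

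The only nontrivial step is justifying differentiation under the integral sign. The plan is to work at an interior point $a$ where $Z(a)<\infty$, and to assume that the set of such $a$ is open. Around such a point, $|\log p_i(x)|\,\tilde\pi_{a'}(x)$ is dominated uniformly for $a'$ in a small neighborhood. The domination comes from the elementary bound $|\log p_i|\,e^{\epsilon|\log p_i|}$-type growth being absorbed by slightly perturbed exponents: $|u|\le C_\epsilon(e^{\epsilon u}+e^{-\epsilon u})$. By H\"older-type convexity, the resulting integrals are finite whenever $Z$ is finite on a neighborhood. Dominated convergence then applies.

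The same argument covers $Z_y$, because $p(y\mid x)$ is bounded for Gaussian noise. I expect this integrability condition to be the main obstacle. In the paper's setting, I would state it as a standing regularity assumption: $\log p_i$ has finite moments under $\pi_a$ and $\pi_a(\cdot\mid y)$, which is what the Monte Carlo estimator needs anyway. I would not attempt to verify it for learned diffusion densities.
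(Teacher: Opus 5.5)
Your proof is correct and follows the same route as the paper: you split $\log p_a(y)$ into $\log\int p(y\mid x)\prod_j p_j(x)^{a_j}\,dx - \log Z(a)$ and differentiate each log-normalizer under the integral, and $Z(a)$ cancels so that the first term becomes the posterior expectation. The only difference is that the paper just invokes ``usual regularity assumptions'' for interchanging derivative and integral, whereas you sketch the standard exponential-family dominated-convergence argument (finiteness of $Z$ on a neighborhood of $a$, plus a bounded likelihood), which is a useful addition.
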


In order to compute $\log p_i(x)$ for each diffusion prior, we can integrate the Jacobian along the PF-ODE using the Hutchinson trace estimator \cite{song2021maximum} \cite{hutchinson1989stochastic}, re-using the same probe vectors for all models $a_i$ to reduce variance.

\begin{algorithm}[t]
\caption{Generalized expectation-maximization to optimize evidence}
\label{alg:full}
\begin{algorithmic}[1] 
\Require EM iterations $M$, exponent initialization $a^{(1)} \in \mathbb{R}^{n}$, observation $y$
\For{$i = 1, ..., M$}
    \State $x_{post} \sim \pi_{a^{(i)}}(x)p(y \mid x)$
    \State $x_{prior} \sim \pi_{a^{(i)}}(x)$
    \For {$j = 1, ..., n$}
        \State $g_j \gets \E_{x_{post}}[\log p_{j}(x)] - \E_{x_{prior}}[\log p_{j}(x)]$
    \EndFor
    \State $a^{(i+1)} \gets \text{GradientStep}(a^{(i)}, g)$
\EndFor
\State \Return $a, x_{post}$
\end{algorithmic}
\end{algorithm}


Sometimes, it is desirable to preserve the temperature of the product distribution by constraining the sum of exponents to be equal to $1$. For example, with some latent diffusion models, adjusting the temperature rescales the underlying latent Gaussian prior, leading to out-of-distribution latents and poorly decoded images. Alternatively, fixing the temperature can improve interpretability of an optimized product prior by focusing on relative weighting rather than overall prior strength. We derive the gradient under these constraints:
\begin{restatable}{corollary}{msteplatent}
    Given prior $\pi_a(x) \propto\prod_{i=1}^n p_i(x_0)^{a_i}$ such that $a_n = 1-\sum_i a_i$, measurement $y$, and likelihood $p (y \mid x)$, the evidence gradient w.r.t. $a_i$ can be written as:
    \begin{align}
        \frac{\partial}{\partial a_i} \log p_a(y) &= \E_{x \sim \pi_a(x) p(y \mid x)}[\log p_i (x)-\log p_n (x)] \\
        &- \E_{x \sim \pi_a(x)}[\log p_{i}(x)-\log p_{n}(x)].
    \end{align}\label{cor:mstep-latent}
\end{restatable}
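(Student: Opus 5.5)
The plan is to reduce the corollary to Proposition~\ref{prop:m-step} by the chain rule. The constraint as written ($a_n = 1-\sum_i a_i$) should be read as $a_n = 1 - \sum_{j=1}^{n-1} a_j$, with $a_1,\dots,a_{n-1}$ the free parameters. Let $\tilde a(a_1,\dots,a_{n-1}) = (a_1,\dots,a_{n-1}, 1-\sum_{j<n} a_j)$ denote the full exponent vector. The constrained evidence is then the composition $\log p_{\tilde a}(y)$. Differentiating with respect to a free coordinate $a_i$ gives
\begin{align}
\frac{\partial}{\partial a_i}\log p_{\tilde a}(y) = \sum_{j=1}^n \frac{\partial \log p_{\tilde a}(y)}{\partial \tilde a_j}\frac{\partial \tilde a_j}{\partial a_i} = \frac{\partial \log p_{\tilde a}(y)}{\partial \tilde a_i} - \frac{\partial \log p_{\tilde a}(y)}{\partial \tilde a_n},
\end{align}
since $\partial \tilde a_j/\partial a_i = \delta_{ij}$ for $j<n$ and $\partial \tilde a_n/\partial a_i = -1$.

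Next I would substitute Proposition~\ref{prop:m-step} for each of the two unconstrained partials. Each one is a difference of the posterior expectation and the prior expectation of $\log p_i$, respectively $\log p_n$. Subtracting them and using linearity of expectation under the same distributions $\pi_a(x\mid y)\propto \pi_a(x)p(y\mid x)$ and $\pi_a(x)$ gives exactly the stated expression.

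For completeness I might re-derive the proposition inline. Write $\pi_a = \tilde\pi_a / Z(a)$ with $\tilde\pi_a = \prod_j p_j^{a_j}$. Then $\log p_a(y) = \log\int p(y\mid x)\tilde\pi_a(x)\,dx - \log Z(a)$. Differentiating under the integral, using $\partial_{a_i}\tilde\pi_a = \tilde\pi_a \log p_i$, yields the posterior and prior expectations of $\log p_i$.

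The main obstacle is regularity: justifying the interchange of derivative and integral, and ensuring that $Z(a)$ and the evidence integral are finite in a neighborhood of $a$. I would assume, as the proposition implicitly does, that $\log p_i$ is integrable against $\pi_a$ locally uniformly in $a$, and then invoke dominated convergence. Given this, the corollary is purely a chain-rule consequence of the proposition, so no new difficulty arises beyond what the proposition already requires.
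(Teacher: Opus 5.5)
Your proposal is correct and matches the paper's proof: the paper also uses the chain rule with $\partial a_n/\partial a_i = -1$ to write the constrained derivative as $\partial_{a_i}\log p_a(y) - \partial_{a_n}\log p_a(y)$, then applies Proposition~\ref{prop:m-step} to each term. Your reading of the constraint as $a_n = 1-\sum_{j<n} a_j$ is the intended one, and your regularity caveat corresponds to the paper's ``usual regularity assumptions.''
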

\vspace{-0.1in}
\subsection{Selecting Product Prior Exponents}
\label{sec:method_findbest}
\vspace{-0.1in}

 Here we describe two complementary approaches that leverage evidence gradients, computed using the method of Section~\ref{sec:method_gradient}, to select the prior exponents.

\subsubsection{Grid-Based Evidence Field Estimation}
When there are only two candidate priors $i=1,2$, we can evaluate $\frac{d}{da_i} \log p_a(y)$ over a grid of $a$ indexed by $m,n$ and fit an underlying evidence field $\phi$.  Since the gradients typically have some Monte Carlo noise, we recover $\phi$ by solving a weighted constrained least-squares problem. We can define boundary conditions by computing the absolute evidence $\log Z_{p_1},\log Z_{p_2}$ of each individual prior using DiME \cite{wang2026sample}. Let $g^{(i)}_{m,n} := \frac{d}{da_i} \log p_{a_1=m,a_2=n}(y)$. Our optimization becomes:
\begin{align}
    \min_\phi \sum_{m,n} \sum_{i=1}^2 w^{(i)}_{m,n} \|(D_i\phi)_{m,n} - g^{(i)}_{m,n} \|^2 \\
    \text{ s.t. } \phi(1,0) = \log Z_{p_1}, \phi(0,1) = \log Z_{p_2}. 
\end{align}
where $D_i$ is the finite difference operator for index $i$. We weight each term in the least squares by $w^{(i)}_{m,n} \propto \frac{1}{|g^{(i)}_{m,n}|^2}$, as the Monte Carlo variance generally grows with the squared norm of the gradient.

\subsubsection{EM-Based Exponent Optimization}
When there are more than two priors, or if computational efficiency is desired, generalized expectation-maximization can find a local maximizer of the evidence in few iterations. We can alternate between sampling the unconditional and posterior distributions of the current product prior and updating the exponents with a normalized evidence gradient. The full EM loop is shown in Algorithm \ref{alg:full}. This method is most appropriate when the evidence field is smooth and exhibits a well-defined maximum.

\section{Experiments}
\subsection{Gaussian toy ablations}\label{sec:gaussian-example}
\begin{figure}
    \centering
    \includegraphics[width=0.9\linewidth]{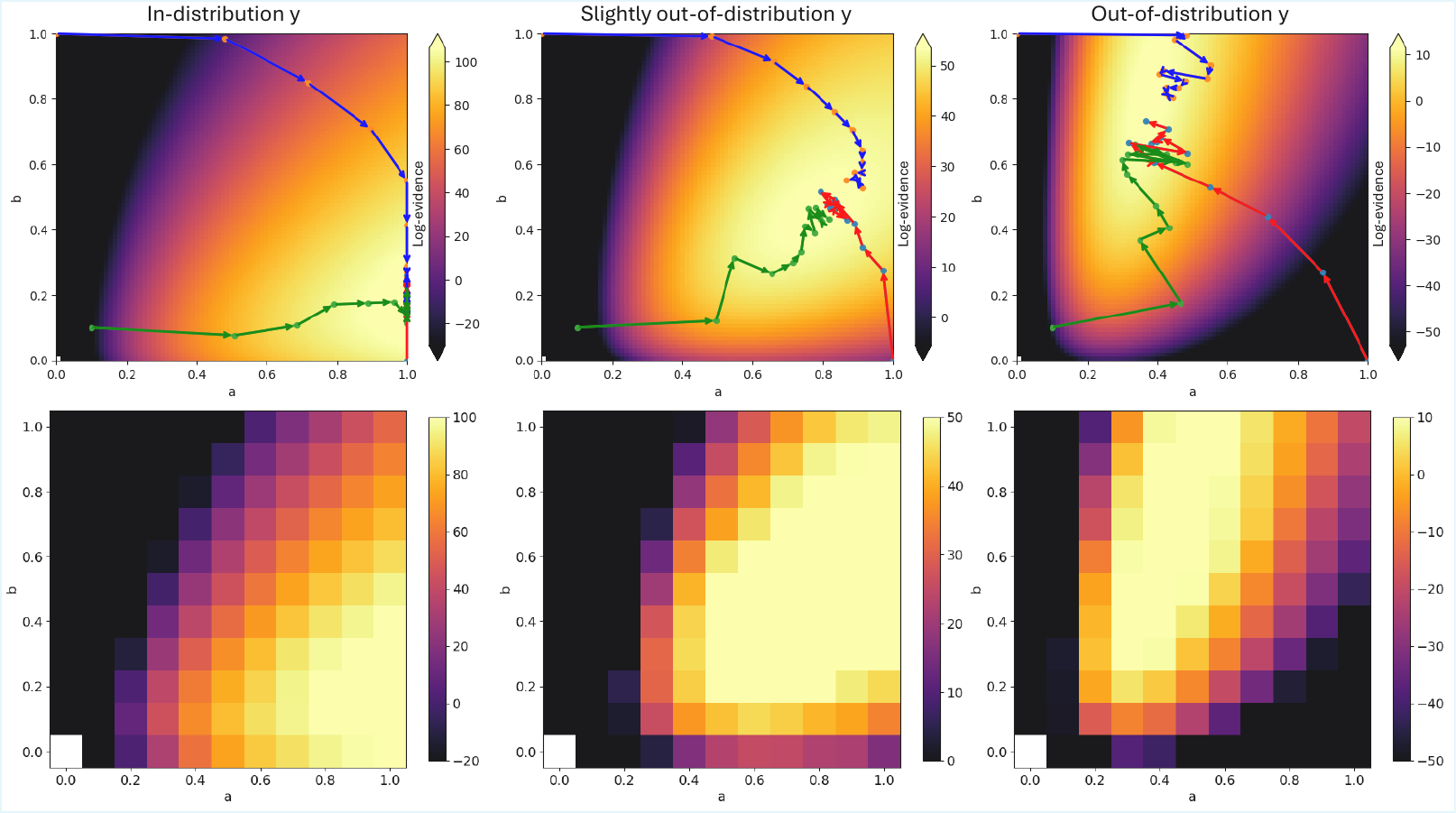}
    \caption{\textbf{Top:} Expectation-maximization iterations for the exponents $a,b$ of the 1000-dimensional product prior $\propto p(x)^a q(x)^b$. Here, $p(x)$ represents a  narrow Gaussian prior while $q$ represents a wider Gaussian prior. Paths for an in-distribution, slightly out-of-distribution, and fully out-of-distribution measurement (with respect to prior $p$) with different initializations are shown. The analytic evidence is also shown. \textbf{Bottom:} reconstructed evidence field from a discretized 11x11 grid with $da = db=0.1$. The reconstructed field closely matches the ground truth.}
    \vspace{-0.2in}
    \label{fig:unimodal_gaussian}
\end{figure}

We first validate our method on two 1000-D Gaussian priors, $p(x) = \mathcal{N}(\mu_{p}, \Sigma_{p})$ and $q(x) = \mathcal{N}(\mu_{q}, \Sigma_{q})$ with $\mu_p = \textbf{1}$ and $\mu_q=\textbf{0}$. The diagonal covariance of $p$ was randomly sampled to represent a very strong but narrow prior with $\sigma_{p,i} \sim \text{Unif}[0.1, 0.2]$, while the diagonal covariance of $q$ was randomly sampled to represent a much broader prior with $\sigma_{q,i} \sim \text{Unif}[0.1, 1.0]$. We take three different ground truth $x = [0.9,0.9], x=[0.8, 0.8], x=[0.7, 0.7]$  representing in-distribution, slightly out-of-distribution, and significantly out-of-distribution of our strong prior $p$. We simulate a measurement using a linear forward model $y = Ax + \varepsilon$, $A \in \mathbb{R}^{m \times n}$, $\varepsilon \sim \mathcal{N}(0, \sigma_y^2)$ with $m=200$ and $\sigma_y = 0.2$. Under this unimodal Gaussian prior and linear likelihood, the evidence field for all products and temperatures of priors can be analytically computed for comparison. 

We compute evidence gradients over a 11x11 grid with grid spacing $0.1$ in order to reconstruct the evidence field, using the ground truth evidence for individual priors $p$ and $q$ as boundary conditions. For EM optimization of the exponents we simulate the conditions of a diffusion model by sampling 20 unconditional and 20 posterior samples using Algorithm \ref{alg:uncond-mcmc} with 50 annealing steps and 20 mixing steps per annealing iteration. We perform the M-step by integrating along the PF-ODE using trace estimation, as done with a diffusion model. For each measurement, we perform three EM trajectories, each with 12 iterations.

We plot the results for all three measurements in Figure \ref{fig:unimodal_gaussian}. With 20 samples, we are able to get accurate gradients. Using our EM approach, the exponents converge to the evidence maximizer in all cases despite the high dimensionality. Furthermore, the reconstructed evidence field closely resembles the analytic field, indicating that our optimization is robust to noise in the gradients.

\subsection{Black hole imaging}

Black hole images cannot be observed directly; instead they must be inferred from sparse measurements in a highly ill-posed inverse problem whose solutions depend critically on prior assumptions.
Physical models of black holes, such as General-Relativistic Magnetohydrodynamics (GRMHD) \cite{mizuno2022grmhd}, can be used to generate simulations from which physically motivated priors can be learned.
However, these models arise from our current limited understanding of physics.
Furthermore, even if the physics is accurate, the resulting simulations remain limited by spatial and temporal sampling constraints and by the small number of simulations that can be generated due to their high computational cost.
Thus, it is natural to ask: can we  improve our black hole priors by augmenting them with a weak generic prior, or by mixing with an alternative strong assumption? 


\begin{figure}[tb]
    \centering
    \vspace{-6pt}
    \includegraphics[width=0.7\linewidth]{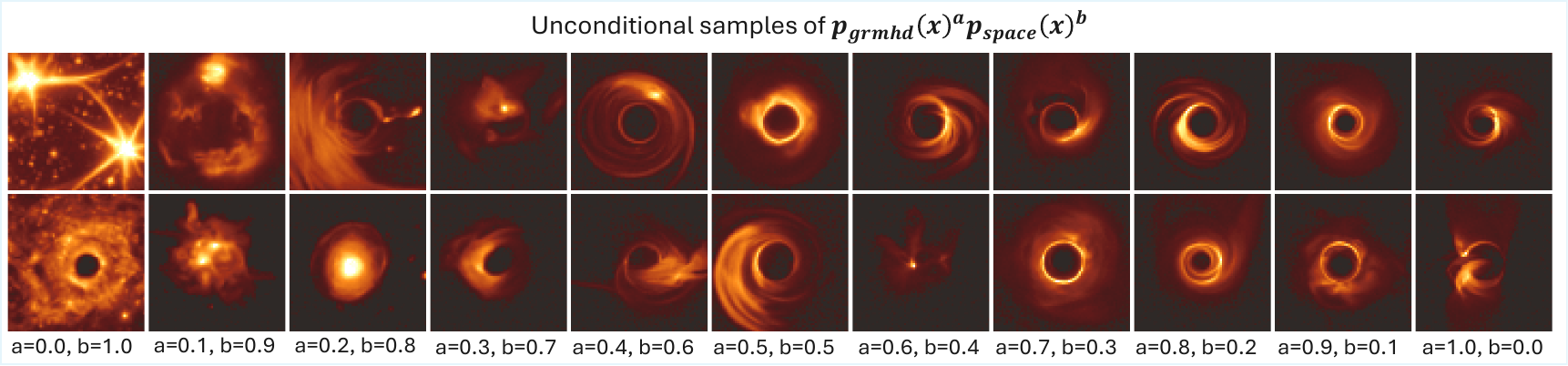}
    \vspace{-6pt}
    \caption{Unconditional samples of the product of the GRMHD and space priors.}
    \vspace{-12pt}
    \label{fig:prior_samples}
\end{figure}

\begin{figure}[tb]
    \centering
    \vspace{-0in}
    \includegraphics[width=1.00\linewidth]{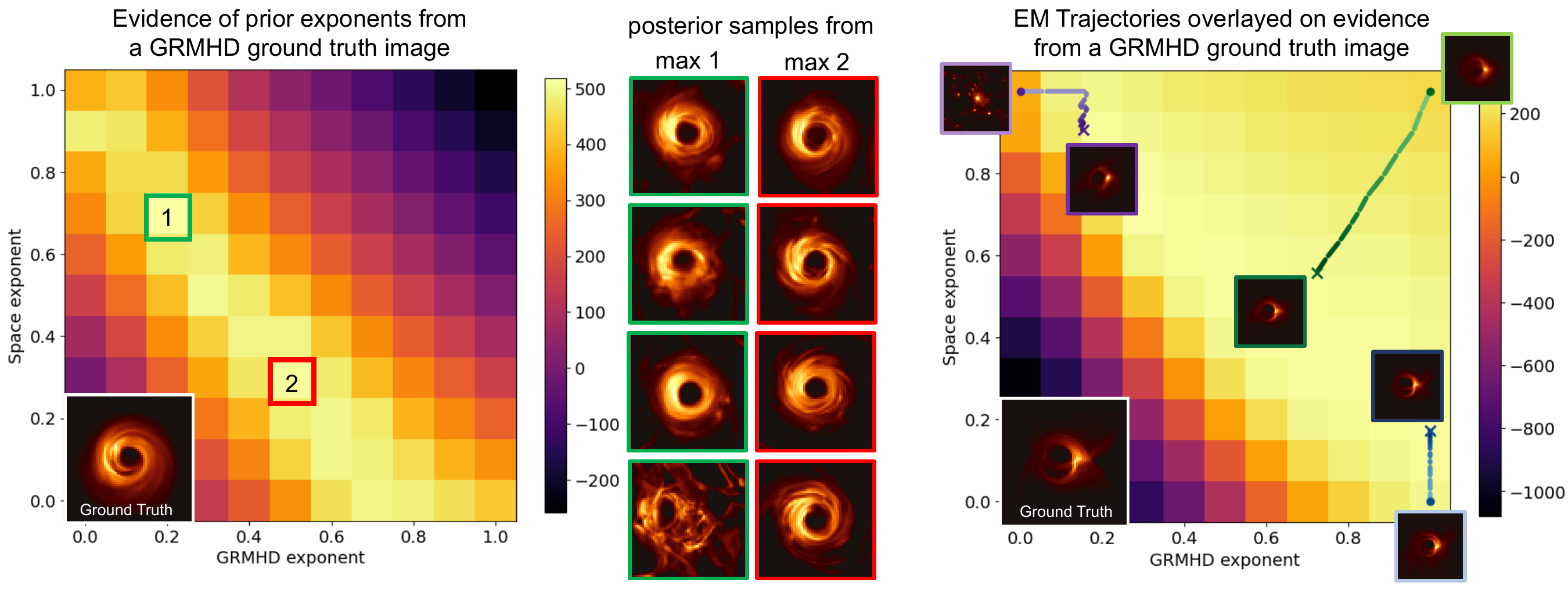}
    \caption{Evidence fields for measurements of two in-distribution GRMHD images. \textbf{Left:} Because the ring size is unusually large relative to the training set, the evidence is maximized by tempering the GRMHD prior and combining it with the more general space prior. Posterior samples at the evidence-maximizing exponents are shown. \textbf{Right:} Evidence field and EM trajectories for a highly typical GRMHD image. The evidence favors sharpening the GRMHD prior (increasing temperature), indicating that the observation is so characteristic of GRMHD that a narrower prior improves the fit.}
    \vspace{-.2in}
    \label{fig:grmhd-fields}
\end{figure}

In the black hole imaging problem, we obtain measurements from a network of telescopes around the world called the Event Horizon Telescope (EHT)~\cite{event2019first1}. Given true frequency-domain measurements $v^*$ of an image, we obtain measurements from pairs of telescopes $i$, $j$ through the forward model $v_{i,j} = g_i g_j e^{i(\phi_i - \phi_j)} v_{i,j}^* + \varepsilon_{i,j}$, where $g_i$ is a station-dependent gain error, $\phi_i$ is a station-dependent phase error from atmospheric noise and $\varepsilon_{i,j}$ is thermal noise. We define two quantities to decouple the noise sources into approximately independent Bayesian likelihoods: the closure phase $y_{cp} = v_{i,j} v_{j,k} v_{k,i}$, which requires data from a minimal set of 3 telescopes $(i,j,k)$,  and the closure amplitude $y_{ca} = \frac{v_{i,j} v_{k,l}}{v_{i,k} v_{j,l}}$, which requires data from a minimal set of 4 telescopes  $(i,j,k,l)$ \cite{thompson2017interferometry, blackburn2020closure}. 
The EHT measures only sparse coverage of the Fourier domain, and captures only low spatial frequencies relative to the ground-truth images shown in our figures, thus leaving much of the high-frequency structure unconstrained. Furthermore, because closure phases remove absolute phase information, the reconstruction problem is invariant to translations in image space.

We generate an 11x11 evidence field for each example observation, with the base GRMHD model being shown in the bottom right and either a weak prior trained on generic space images \cite{alamimam2024flare} or a strong prior trained on handwritten MNIST digits of the number 0 \cite{deng2012mnist} -- chosen because its ring-like structure resembles a black hole -- being shown on the top left. 
We also perform EM on a subset of these examples. Computing the evidence gradient for a set of exponents takes 5 minutes on one NVIDIA A100. We generate 20 unconditional and conditional samples for each set, using 200 annealing steps and 20 mixing steps. We also constrain the exponents to be between $[0,1]$ for interpretability.

\begin{figure}[tb]
    \centering
    \includegraphics[width=1.00\linewidth]{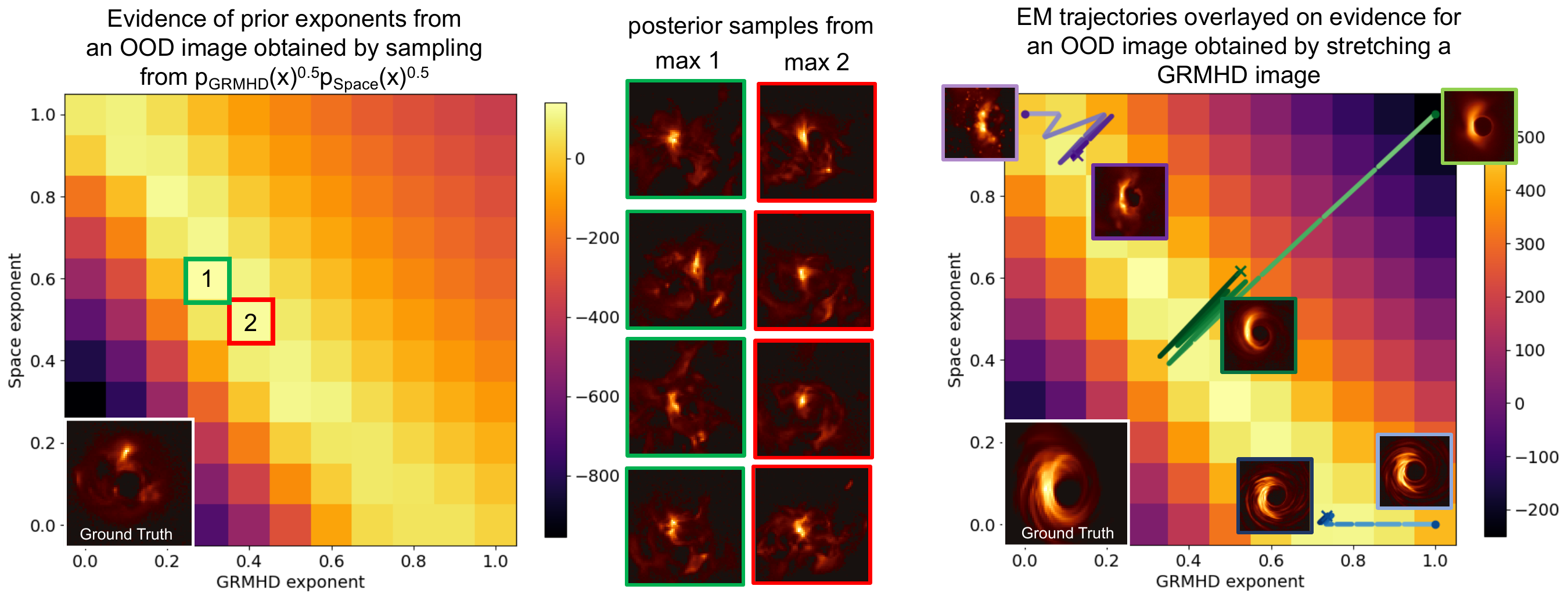}
    \caption{Evidence fields for measurements for two slightly out-of-distribution images. \textbf{Left:} as the ground truth was drawn from $p_{GRMHD}^{0.5} p_{space}^{0.5}$, the evidence is maximized near this region, and posterior samples reconstruct the bright areas correctly up to translation invariance. \textbf{Right:} EM trajectories for an observation generated from a stretched GRMHD image. The evidence is maximized by relaxing the GRMHD prior, either through tempering or by incorporating the more general space prior.}
    \vspace{-.2in}
    \label{fig:ood-fields}
\end{figure}
\begin{figure}
    \centering
    \includegraphics[width=1.00\linewidth]{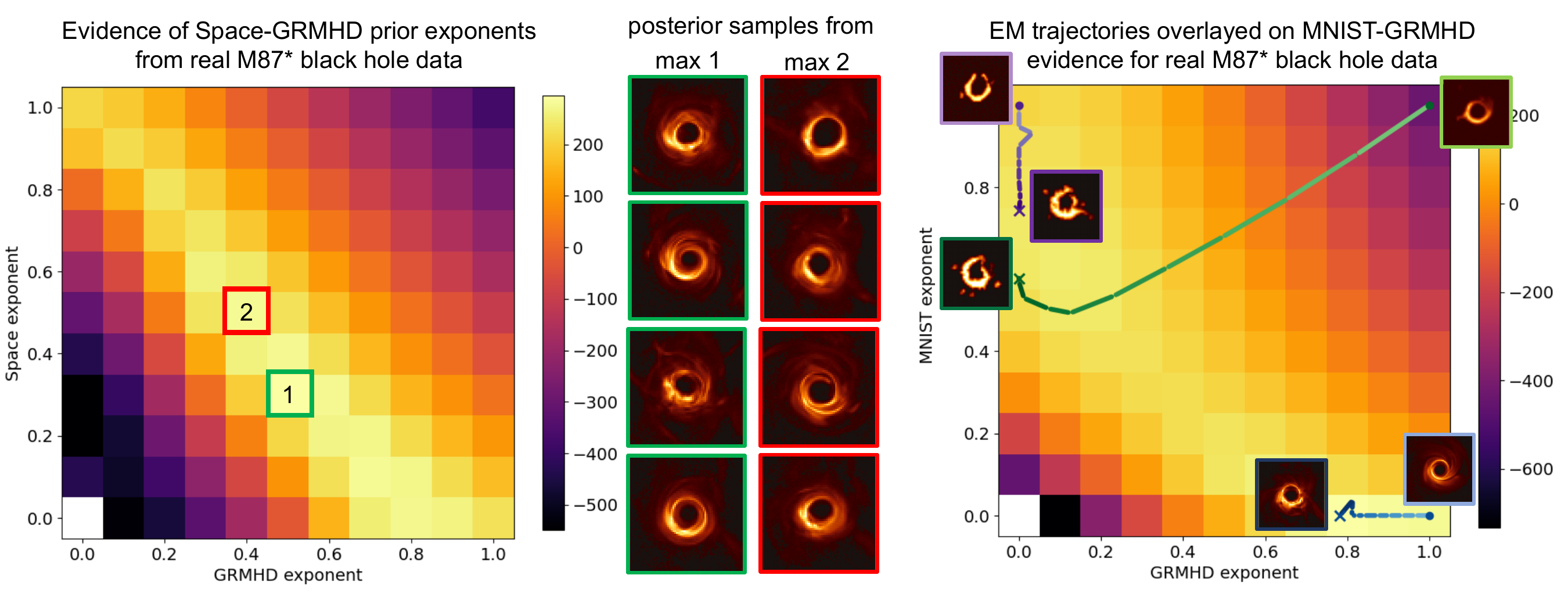}
    \caption{Evidence fields for real M87* black hole observations collected by the Event Horizon Telescope. \textbf{Left:} similar to previous atypical images, the evidence is maximized here by relaxing the GRMHD prior, either through tempering or by incorporating the more general space prior. Posterior samples show an increased range of possible structures compared to only using GRMHD as a prior (see supplement). \textbf{Right:} Evidence field of M87* between two strong priors: GRMHD and handwritten MNIST 0 digits, as well as EM trajectories. The evidence maximizer prefers tempering the GRMHD prior without using any information from MNIST, but interestingly, there is a local maximum at the tempered MNIST prior that two EM trajectories converge to.}
    \vspace{-.2in}
    \label{fig:M87-fields}
\end{figure}

\textbf{How much should we generalize the black hole prior?} As helpful context, we first visualize unconditional samples from product priors formed by combining the GRMHD and generic Space priors with different exponents in Figure~\ref{fig:prior_samples}. We then examine evidence fields for observations of ground-truth images that are both in- and out-of-distribution for the GRMHD prior.

Evidence fields of measurements from two in-distribution GRMHD images are shown in Figure~\ref{fig:grmhd-fields}. On the left, we consider an atypical but still in-distribution GRMHD image. The evidence field identifies two high-evidence exponent settings, $p_{GRMHD}^{0.2} p_{space}^{0.7}$ and $p_{GRMHD}^{0.5} p_{space}^{0.3}$, and posterior samples corresponding to these are shown. Because the simulated black hole has an unusually large diameter relative to the training set, the evidence favors both tempering the GRMHD prior and combining it with the more general space prior. Despite deviating from the base GRMHD model, these exponent choices still produce plausible reconstructions, indicating that modestly relaxing the prior can accommodate atypical observations. On the right, we consider a highly typical GRMHD image. The resulting evidence field exhibits a broad, relatively flat region of high evidence. We visualize three EM trajectories initialized at $(1,0)$, $(1,1)$, and $(0,1)$. Although these trajectories converge to different local maxima within this plateau, the resulting posterior samples are all of high quality and appear sharper than those obtained from the initializations. Even the initialization corresponding to the base GRMHD prior moves toward stronger regularization, suggesting that the base prior is slightly too broad for this specific observation.

We next use two slightly out-of-distribution measurements: one generated by sampling from the product prior $p_{GRMHD}^{0.5}p_{space}^{0.5}$, and another obtained by applying a spatial stretch to a GRMHD image. The results are shown in Figure~\ref{fig:ood-fields}. For the ``product-prior sample,'' the evidence is maximized near the ground-truth exponents $(0.5,0.5)$, and the resulting posterior samples recover the main bright structures, up to the translation invariance of the imaging problem. The stretched GRMHD image produces a similar evidence field to the atypical GRMHD image: the preferred exponents again favor both tempering the GRMHD prior and incorporating the more general space prior. The EM trajectories initialized at $(1,0)$, $(1,1)$, and $(0,1)$ all converge toward regions of higher evidence and produce posterior samples that are visually improved relative to the initializations.

Across these experiments we also observe a consistent diagonal structure in the evidence fields. We believe this arises because the space prior implicitly acts as a smoothness regularizer: reducing the weight of the strong GRMHD prior (which broadens the distribution through tempering) can often be compensated by increasing the weight of the space prior, leading to families of exponent combinations with comparable evidence.

We now examine the evidence field for the real M87* black hole observations measured by the EHT from April 6, 2017 \cite{dataset}. Results are shown in Figure~\ref{fig:M87-fields} for the GRMHD/space priors (left) and the GRMHD/MNIST 0 priors (right). For the GRMHD/space combination, the evidence can be increased relative to the base GRMHD model by either tempering the GRMHD prior or combining it with the more general space prior, consistent with the behavior observed for the slightly out-of-distribution GRMHD images. The evidence is maximized near $p_{GRMHD}^{0.5} p_{space}^{0.3}$ and $p_{GRMHD}^{0.4} p_{space}^{0.5}$, with corresponding posterior samples shown in the figure. In contrast, when combining GRMHD with the MNIST 0 prior, the evidence is maximized by ignoring the MNIST prior entirely and instead tempering the GRMHD model. Nevertheless, the evidence field contains a secondary local maximum around $p_{MNIST}^{0.7}$, to which two different EM initializations converge.

\vspace{-0.1in}
\subsection{Text-conditioned image restoration}\label{sec:stable-diffusion}

Combining multiple text conditions can lead to detailed, higher quality unconditional samples \cite{du2023reduce, zhang2025product}. Here, we use EM to determine the optimal weighting of conditions for blurry measurements. We used a blurring forward model with $y = K \star x + \varepsilon$, where $K$ is a Gaussian blur kernel with a standard deviation of $10$ pixels and the noise $\varepsilon$ has standard deviation $\sigma_y = 0.1$. For these experiments, we use Stable Diffusion 1.5. While Stable Diffusion is a latent diffusion model, our posterior sampling method naturally extends to latent space by annealing along the marginals $\pi_a(z_t) p(y \mid D(\mu(z_t))$, where $D$ represents the latent space decoder. We use the constrained gradient update (Corollary \ref{cor:mstep-latent}) to avoid producing OOD latents. We used $25$ annealing steps, $20$ corrector steps and 10 EM iterations, and we generate 10 posterior and prior samples at each iteration. Each EM iteration took around 15 minutes on a NVIDIA A100, with the main bottleneck being backpropagation through the decoder for the likelihood term.

\begin{figure}[b]
    \centering
    \vspace{-.2in}
    \includegraphics[width=0.9\linewidth]{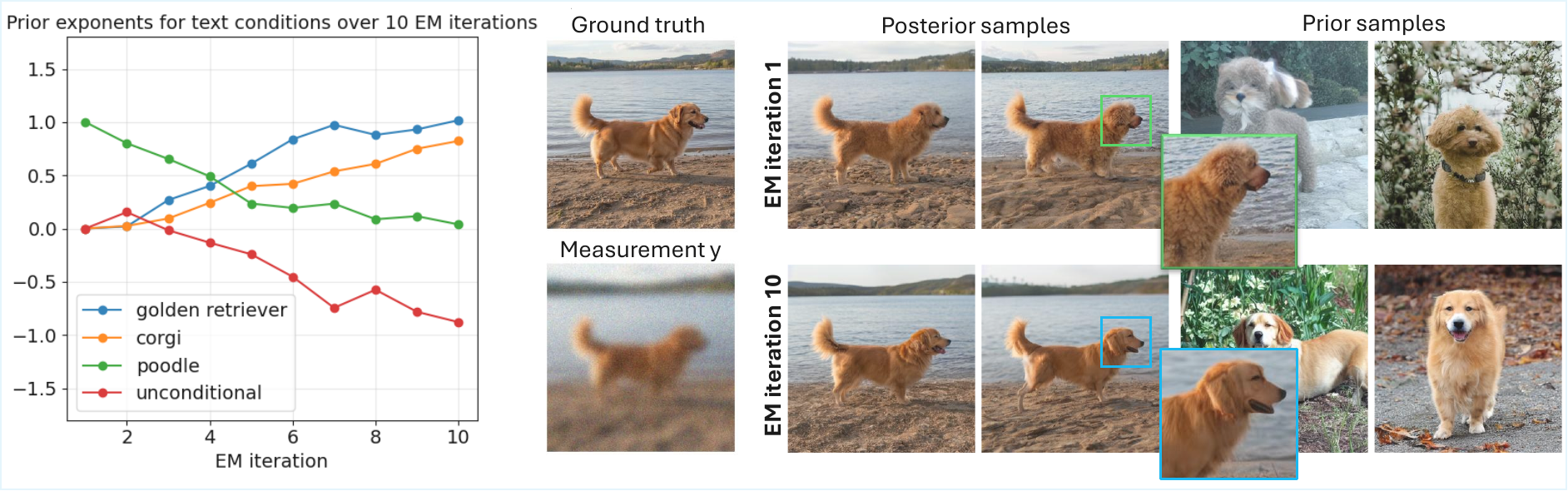}
    \caption{\textbf{Top:} Exponents over many EM iterations for the golden corgi mixed-breed dog image. The EM converges to a large weight on the "golden retriever" and "corgi" priors and negative weight on the "poodle" prior. \textbf{Bottom:} Posterior and prior samples for iteration 1, which used the incorrect prior $p(x \mid \text{poodle})$, and iteration 10, which used the optimized prior. The reconstructions at iteration 10 are more semantically accurate.}
    \vspace{-.0in}
    \label{fig:dog-em}
\end{figure}

\begin{figure}
    \centering
    \includegraphics[width=0.9\linewidth]{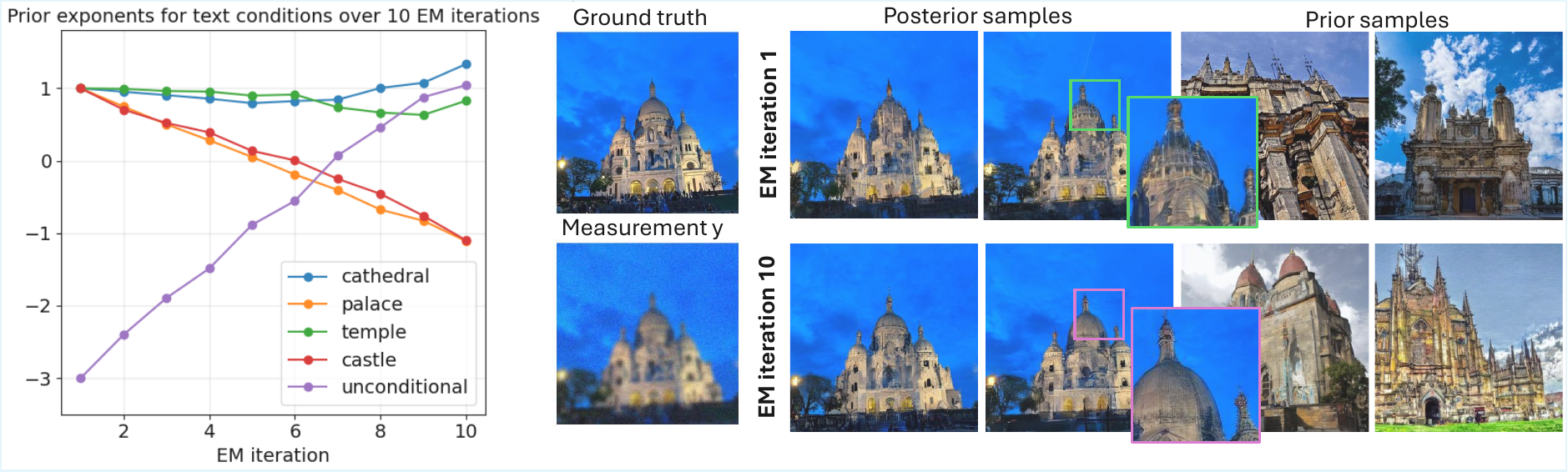}
    \caption{\textbf{Left:} Exponents over many EM iterations for a low exposure nighttime image of the Sacre Coeur, taken using the author's smartphone and is unlikely to come from any training set. The EM converges to a large weight on the "cathedral" and "temple" priors, indicating that both priors are useful despite the ground truth not technically coming from either, as well as a large negative weight on the palace and castle priors. \textbf{Right:} Posterior and prior samples for iteration 1 and iteration 10. The reconstructions at iteration 10 are more semantically accurate.}
    \vspace{-.2in}
    \label{fig:church-em}
\end{figure}

For our first experiment, we have a ground truth image of a \textit{golden corgi}, a mixed-breed dog from a golden retriever and a corgi. We propose the candidate conditions "golden retriver", "corgi", "poodle", and unconditional (null condition). EM results are shown in Figure \ref{fig:dog-em}. Despite the poor initialization, the exponents converge with high weight on "golden retriever" and "corgi", which not only gives us better posterior samples, but also tells us that the measurement is likely a mix of a golden retriever and a corgi, but not a poodle. We also find that the EM converges to the exact same set of exponents at other initializations, supporting an evidence maxima (see supplemental material).

For the second experiment, we take a real-world photo of the Sacre Coeur using a phone as the ground truth and apply blurring, simulating poor exposure from low light. We propose the candidate conditions "cathedral", "palace", "temple", "castle", as well as no condition, and initialize with a high weight on all conditions. Results are shown in Figure \ref{fig:church-em}. Even though the Sacre Coeur is not any of these, our EM method still finds that a prior of "cathedral" and "temple" conditions along with negation of the "palace" and "castle" conditions has higher evidence. This can also be seen visually, as the combined prior generates spikes and ridges on top of the dome, likely due to the castle prior. Initializing EM at other locations also converges to the same exponents.

\vspace{-0.1in}
\section{Conclusion}
\vspace{-0.1in}

While diffusion priors are often imperfect representations of the true image distribution, we demonstrate that combining or tempering existing diffusion models can yield product-of-experts priors with higher Bayesian evidence than any individual prior. To enable this, we develop a method for computing gradients of the evidence with respect to the prior exponents, allowing these exponents to be optimized for a given observation.
Overall, our framework enables diffusion priors to be adapted in a principled, evidence-driven manner for scientific imaging, providing insight into how different prior assumptions are supported by the measurements.


\section*{Acknowledgments}
This work was supported by NSF Award 2048237, an Amazon AI4Science Discovery Award, OpenAI, and a Sloan Research Fellowship. FW is supported by a Kortschak Fellowship. 
%
%
\bibliographystyle{splncs04}
\bibliography{main}

\newpage

\setcounter{tocdepth}{2}

\section*{Supplemental for "Optimizing Diffusion Priors in Image Reconstruction from a Single Observation"}

\subsection{Proofs}

\begin{restatable}{proposition}{effectivenoise}\label{prop:effective-noise}
    Assume that for all models, $p_i(x_0 \mid x_t) \approx \mathcal{N}(\mu_{p_i}(x_t), \Sigma_t)$, where $\mu(x_t)$ comes from each diffusion model and $\Sigma_t$ is some covariance. Define the surrogate conditional $\pi_a(x_0 \mid x_t)\ \propto\ \prod_{i=1}^n p_i(x_0 \mid x_t)^{a_i}$. Given a target prior $\pi_a(x_0) \propto \prod_{i=1}^n p_i(x_0)^{a_i}$ with $\sum_i a_i>0$ and an intermediate sample $x \sim \pi_a(x_t) \propto \prod_{i=1}^n p_i(x_t)^{a_i}$, we can create a surrogate denoiser mean by using Tweedie's formula at an \textit{effective noise} $\sigma_{t,eff} = \frac{\sigma_t}{\sqrt{\sum_{i=1}^n a_i}}$:
    \begin{align}
        \E_{\pi_a} [x_0 \mid x_t] \approx  x_t +\sigma_{t,eff}^2\,\nabla_{x_t}\log \pi_a(x_t). \label{eq:pi-tweedie}
    \end{align}
\end{restatable}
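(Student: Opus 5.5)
The argument is a direct Gaussian computation: the product of the Gaussian conditionals is again Gaussian, and we then rewrite its mean using Tweedie's formula for each expert.

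\textbf{Step 1: the surrogate conditional is Gaussian.} Under the assumption $p_i(x_0 \mid x_t) \approx \mathcal{N}(\mu_{p_i}(x_t), \Sigma_t)$, the surrogate conditional satisfies
\begin{align}
\log \pi_a(x_0 \mid x_t) = -\tfrac{1}{2}\sum_i a_i (x_0-\mu_{p_i})^\top \Sigma_t^{-1}(x_0-\mu_{p_i}) + \text{const}.
\end{align}
Because the covariance $\Sigma_t$ is shared across models, completing the square shows that $\pi_a(x_0 \mid x_t)$ is Gaussian with precision $(\sum_i a_i)\Sigma_t^{-1}$. Its mean is the weighted average
\begin{align}
\E_{\pi_a}[x_0 \mid x_t] = \frac{\sum_i a_i \mu_{p_i}(x_t)}{\sum_i a_i}.
\end{align}
This is where the hypothesis $\sum_i a_i > 0$ is used: it makes the precision positive definite and the distribution normalizable. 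The $a_i$ individually may be negative.

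\textbf{Step 2: rewrite each mean with Tweedie.} For each expert, Tweedie's formula (Section 2) under the VE process gives $\mu_{p_i}(x_t) = x_t + \sigma_t^2 \nabla_{x_t}\log p_i(x_t)$. Substituting into the weighted average, the $x_t$ terms contribute $x_t$, since the weights sum to one. The score terms combine to
\begin{align}
\frac{\sigma_t^2}{\sum_i a_i}\sum_i a_i \nabla \log p_i(x_t).
\end{align}
By definition of $\pi_a(x_t) \propto \prod_i p_i(x_t)^{a_i}$, the normalizing constant does not depend on $x_t$. Hence $\sum_i a_i \nabla\log p_i(x_t) = \nabla_{x_t}\log \pi_a(x_t)$. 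This yields $x_t + \sigma_{t,eff}^2 \nabla \log \pi_a(x_t)$ with $\sigma_{t,eff}^2 = \sigma_t^2/\sum_i a_i$, which is Eq.~\eqref{eq:pi-tweedie}.

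\textbf{Main obstacle: what the "$\approx$" means.} The surrogate conditional $\prod_i p_i(x_0\mid x_t)^{a_i}$ is generally not the true conditional of the product prior. The true conditional is $\pi_a(x_0)\,p(x_t\mid x_0)/\pi_a(x_t)$, which contains only one copy of the forward kernel rather than $\sum_i a_i$ copies, and the product marginals $\pi_a(x_t)$ are not the diffusion marginals of $\pi_a(x_0)$. I would therefore present the result as exact for the surrogate conditional under the stated Gaussian assumption, with the approximation inherited entirely from that assumption and from identifying the surrogate with the true conditional.

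To justify this identification, I would give two consistency checks:
\begin{itemize}
\item As $t \to 0$, $\Sigma_t \to 0$ and both sides tend to $x_0$, so the formula is asymptotically exact.
\item At high noise, $p_i(x_T)\approx\mathcal{N}(0,\sigma_T^2)$ gives $\pi_a(x_T)\approx\mathcal{N}(0,\sigma_T^2/\sum_i a_i)$. This matches a VE process run at noise level $\sigma_{t,eff}$ and supports the effective-noise interpretation.
\end{itemize}
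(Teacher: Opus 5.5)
Your proposal is correct and follows the paper's proof essentially line by line. You complete the square to get $\mathcal{N}\bigl(m, \Sigma_t/\sum_i a_i\bigr)$ with $m$ the $a$-weighted average of the $\mu_{p_i}(x_t)$, then substitute Tweedie's formula for each expert and identify $\sum_i a_i \nabla\log p_i(x_t)$ with $\nabla_{x_t}\log\pi_a(x_t)$; your discussion of what the ``$\approx$'' means and your $t\to 0$ and high-noise checks go beyond the supplement's proof (the high-noise argument appears only in the main text) but are not needed for the computation itself.
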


\begin{proof}
Define $m=\frac{\sum_{i=1}^n a_i \mu_{p_i}(x_t)}{\sum_{i=1}^n a_i}$. Using our Gaussian approximations and completing the square, while absorbing $x_0$-independent terms into the normalizing constant, we get:
\begin{align}
    \pi_a(x_0 \mid x_t) &\propto \exp\!\left(
-\frac{1}{2}\left[ \sum_{i=1}^n a_i (x_0-\mu_{p_i} (x_t))^\top \Sigma_t^{-1}(x_0-\mu_{p_i}  (x_t))
\right]\right) \\
&\propto
\exp\!\left(
-\frac12 \left(\sum_{i=1}^n a_i\right)  \left(x_0-m\right)^\top
\Sigma_t^{-1}
\left(x_0-m\right)
\right). \\
&\propto\mathcal{N}\left(x_0;m, \frac{\Sigma_t}{\sum_{i=1}^n a_i} \right)
\end{align}

which is a normalizable Gaussian density if $\sum_{i=1}^n a_i > 0$. Therefore, the conditional mean under $\pi_a$ is
\begin{align}
    \E_{\pi_a}[x_0 \mid x_t] = m = \frac{\sum_{i=1}^n a_i \mu_{p_i}(x_t)}{\sum_{i=1}^n a_i}.
\end{align}

We can use Tweedie's formula to write this in terms of the scores of $p_i$:
\begin{align}
\mu_{p_i}(x_t) &= x_t + \sigma_t^2 \nabla\log p_i(x_t), \\
\implies \E_{\pi_a}[x_0 \mid x_t] &= \frac{\sum_{i=1}^n a_i \bigl(x_t + \sigma_t^2 \nabla\log p_i(x_t)\bigr)}{\sum_{i=1}^n a_i} \\
&= x_t + \sigma_t^2\,\frac{\sum_{i=1}^n a_i \nabla\log p_i(x_t)}{\sum_{i=1}^n a_i} \\
&= x_t + \frac{\sigma_t^2}{\sum_{i=1}^n a_i}\,\nabla \log \pi_a(x_t) \\
&= x_t + \sigma_{t,eff}^2\,\nabla \log \pi_a(x_t).
\end{align}
\end{proof}

\newpage
\begin{restatable}{proposition}{msteppixel}\label{prop:m-step}
Given prior $\pi_a(x) \propto \prod_{j=1}^n p_j(x)^{a_j}$, measurement $y$, and likelihood $p (y \mid x)$, the evidence gradient w.r.t. $a_i$ can be written as:
\begin{align}
    \frac{\partial}{\partial a_i} \log p_a(y) = \E_{x \sim \pi_a(x) p(y \mid x)}[\log p_i (x)] - \E_{x \sim \pi_a(x)}[\log p_i (x)].
\end{align}
\end{restatable}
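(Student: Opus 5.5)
The plan is to write the evidence as a ratio of two normalizing constants and differentiate each as an exponential family in the natural parameters $a$. Let $\tilde\pi_a(x)=\prod_{j=1}^n p_j(x)^{a_j}$ be the unnormalized product prior and $Z(a)=\int \tilde\pi_a(x)\,dx$ its normalizer, so $\pi_a=\tilde\pi_a/Z(a)$. Let $Z_y(a)=\int p(y\mid x)\tilde\pi_a(x)\,dx$. Then
\begin{align}
\log p_a(y)=\log \int p(y\mid x)\pi_a(x)\,dx=\log Z_y(a)-\log Z(a).
\end{align}
Both $Z$ and $Z_y$ are log-partition functions of exponential families with sufficient statistics $T_j(x)=\log p_j(x)$: the first with base measure $dx$, the second with base measure $p(y\mid x)\,dx$.

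The key computation is that $\partial_{a_i}\tilde\pi_a(x)=\log p_i(x)\,\tilde\pi_a(x)$, since $\tilde\pi_a=\exp(\sum_j a_j\log p_j)$. Differentiating under the integral sign gives
\begin{align}
\frac{\partial}{\partial a_i}\log Z(a)=\frac{\int \log p_i(x)\,\tilde\pi_a(x)\,dx}{Z(a)}=\E_{x\sim\pi_a(x)}[\log p_i(x)].
\end{align}
The same argument gives
\begin{align}
\frac{\partial}{\partial a_i}\log Z_y(a)=\frac{\int \log p_i(x)\,p(y\mid x)\tilde\pi_a(x)\,dx}{Z_y(a)}.
\end{align}
This is the expectation of $\log p_i$ under the normalized density proportional to $\pi_a(x)p(y\mid x)$, namely the posterior $\pi_a(x\mid y)$. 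Subtracting the two derivatives yields the claimed identity.

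The main obstacle is justifying the interchange of derivative and integral. I would restrict to the set of $a$ where $Z(a)<\infty$ and $Z_y(a)<\infty$. This set is convex, by Hölder's inequality applied to $\prod_j p_j^{a_j}$. On its interior, the standard exponential-family result applies: the log-partition function is smooth there, and its derivatives are the moments of the sufficient statistics. Concretely, for $a$ in the interior and $|h|$ small, I would dominate $|\log p_i|\,\tilde\pi_{a+h e_i}$ using $|\log p_i|\le C_\epsilon(p_i^{\epsilon}+p_i^{-\epsilon})$. This bounds it by a combination of $\tilde\pi_{a'}$ at nearby interior points $a'$, which are integrable, so dominated convergence licenses the swap. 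The same domination works for $Z_y$, provided $p(y\mid x)$ is bounded, for instance a Gaussian likelihood. I would state this regularity, namely that $a$ lies in the interior of the set where the product prior is normalizable and the expectations $\E|\log p_i|$ are finite, as an implicit assumption of the proposition and otherwise treat the argument as routine.
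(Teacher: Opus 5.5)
Your proposal is correct and follows the same route as the paper's proof. You split $\log p_a(y)$ into $\log Z_y(a) - \log Z(a)$, differentiate each term under the integral using $\partial_{a_i}\tilde\pi_a = \log p_i\,\tilde\pi_a$, and read off the posterior and prior expectations of $\log p_i$; the only difference is that your exponential-family domination argument justifies the derivative–integral interchange, which the paper simply invokes as ``usual regularity assumptions.''
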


\begin{proof}
    Let $Z(a) = \int \prod_{j=1}^n p_j(x)^{a_j} dx$ be the normalizing constant for $\prod_{j=1}^n p_j(x)^{a_j}$. We have:
    \begin{align}
        \frac{\partial}{\partial a_i} \log p_a(y) &= \frac{\partial}{\partial a_i} \log \int p(y \mid x) \pi_a(x)  dx  \\
        &= \frac{\partial}{\partial a_i}  \log \int p(y \mid x) \frac{\prod_{j=1}^n p_j(x)^{a_j}}{Z(a)}  dx \\
        &=\frac{\partial}{\partial a_i} \log \int p(y \mid x) \prod_{j=1}^n p_j(x)^{a_j} dx \\
        &- \frac{\partial}{\partial a_i} \log \int\prod_{j=1}^n p_j(x)^{a_j} dx
    \end{align}
    Under usual regularity assumptions, the first term can be decomposed as:
    \begin{align}
        &\frac{\partial}{\partial a_i} \log \int p(y \mid x) \prod_{j=1}^n p_j(x)^{a_j} dx \\
        &= \frac{ \frac{\partial}{\partial a_i} \int p(y \mid x) \prod_{j=1}^n p_j(x)^{a_j} dx }{\int p(y \mid x)  \prod_{j=1}^n p_j(x)^{a_j} dx}  \\
        &=  \int \log p_i(x) \frac{p(y \mid x)  \prod_{j=1}^n p_j(x)^{a_j} }{\int p(y \mid x)  \prod_{j=1}^n p_j(x)^{a_j} dx} dx \\
        &= \E_{x \sim \pi_a(x) p(y \mid x)}[\log p_i(x)].
    \end{align}
    Similarly, the second term:
    \begin{align}
        &\frac{\partial}{\partial a_i} \log \int  \prod_{j=1}^n p_j(x)^{a_j} dx \\
        &= \frac{ \frac{\partial}{\partial a_i} \int  \prod_{j=1}^n p_j(x)^{a_j} dx }{\int  \prod_{j=1}^n p_j(x)^{a_j} dx}  \\
        &=  \int \log p_i(x) \frac{ \prod_{j=1}^n p_j(x)^{a_j}}{Z(a)} dx \\
        &= \E_{x \sim \pi_a(x)}[\log p_i(x)].
    \end{align}
\end{proof}

\newpage
\begin{restatable}{corollary}{msteplatent}
    Given prior $\pi_a(x) \propto\prod_{i=1}^n p_i(x_0)^{a_i}$ such that $a_n = 1-\sum_i a_i$, measurement $y$, and likelihood $p (y \mid x)$, the evidence gradient w.r.t. $a_i$ can be written as:
    \begin{align}
        \frac{\partial}{\partial a_i} \log p_a(y) &= \E_{x \sim \pi_a(x) p(y \mid x)}[\log p_i (x)-\log p_n (x)] \\
        &- \E_{x \sim \pi_a(x)}[\log p_{i}(x)-\log p_{n}(x)].
    \end{align}\label{cor:mstep-latent}
\end{restatable}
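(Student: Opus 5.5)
The plan is to reduce the corollary to Proposition~\ref{prop:m-step} by the chain rule. Reading the constraint as $a_n = 1-\sum_{j=1}^{n-1} a_j$, the free parameters are $a_1,\dots,a_{n-1}$, and $a_n$ is a function of them with $\partial a_n/\partial a_i = -1$ for each $i<n$. Let $F(a_1,\dots,a_n) = \log p_a(y)$ denote the evidence as a function of unconstrained exponents. Its constrained version is $G(a_1,\dots,a_{n-1}) = F(a_1,\dots,a_{n-1},1-\sum_{j<n}a_j)$.

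For $i<n$ the chain rule gives
\begin{align}
\frac{\partial G}{\partial a_i} = \frac{\partial F}{\partial a_i} + \frac{\partial F}{\partial a_n}\frac{\partial a_n}{\partial a_i} = \frac{\partial F}{\partial a_i} - \frac{\partial F}{\partial a_n}.
\end{align}
Proposition~\ref{prop:m-step} applies to each partial derivative of $F$, evaluated at the constrained point. Each one is the difference between the posterior expectation and the prior expectation of $\log p_i$ (respectively $\log p_n$), taken under the same distributions $\pi_a(x)p(y\mid x)$ and $\pi_a(x)$. Subtracting the two expressions and using linearity of expectation within each distribution yields
\begin{align}
\E_{\pi_a(x)p(y\mid x)}[\log p_i(x)-\log p_n(x)] - \E_{\pi_a(x)}[\log p_i(x)-\log p_n(x)],
\end{align}
which is the claimed formula.

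The only genuine point to check is that $F$ is differentiable in a neighborhood of the constrained point, so that Proposition~\ref{prop:m-step} and the chain rule apply. This needs $Z(a)=\int\prod_j p_j(x)^{a_j}\,dx$ and the likelihood-weighted integral to be finite near $a$, and differentiation under the integral sign to be permitted. Both are exactly the ``usual regularity assumptions'' already invoked in the proof of Proposition~\ref{prop:m-step}, so we would inherit them. The expectations of $\log p_i$ and $\log p_n$ also need to be finite so that the difference is well defined; this follows from the same domination argument.

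An alternative is to redo the proof of Proposition~\ref{prop:m-step} directly. After the substitution, $\prod_j p_j^{a_j} = p_n(x)\prod_{j<n}\bigl(p_j(x)/p_n(x)\bigr)^{a_j}$, so differentiating in $a_i$ brings down $\log p_i - \log p_n$ in both the numerator integral and the normalizer. This gives the same result and can serve as a sanity check.
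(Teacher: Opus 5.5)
Your proposal is correct and takes the same route as the paper: it writes the constrained derivative via the chain rule as $\frac{\partial}{\partial a_i}\log p_a(y) - \frac{\partial}{\partial a_n}\log p_a(y)$, using $\partial a_n/\partial a_i=-1$, and then applies Proposition~\ref{prop:m-step} to each term. Your reading of the constraint as $a_n = 1-\sum_{j<n} a_j$, your regularity remarks, and the direct reparametrization $p_n\prod_{j<n}(p_j/p_n)^{a_j}$ are sound additions that the paper leaves implicit.
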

\begin{proof}
   Denote $\frac{d}{d a_i} \log p_a(y)$ as the constrained derivative and $\frac{\partial}{\partial a_i} \log p_a(y)$ as the unconstrained. We can rewrite the constrained derivative using the chain rule:
    \begin{align}
        \frac{d}{d a_i} \log p_a(y) &= \frac{\partial}{\partial a_i} \log p_a(y) + \frac{\partial a_n}{\partial a_i} \frac{\partial}{\partial a_n} \log p_a(y) \\
        &= \frac{\partial}{\partial a_i} \log p_a(y) - \frac{\partial}{\partial a_n} \log p_a(y).
    \end{align}
    Applying Proposition \ref{prop:m-step} gives us the desired result.
\end{proof}

\newpage
\subsection{Implementation details}

For all experiments, we used $\sigma_{max,eff} = 50$ and $\sigma_{min,eff} = 0.01$ and followed the annealing noise schedule  $\left(\sigma_{max,eff}^{1/7}+i\left(\sigma_{min,eff}^{1/7}-\sigma_{max,eff}^{1/7}\right)\right)^7$ for $i$ spaced uniformly between $0$ and $1$. At the annealing iteration with noise level $\sigma_{t,eff}$ we used the Langevin dynamics learning rate of $0.05 \sigma_{t,eff}^2$. We also use likelihood scaling $\beta_t = \min(1.0, 0.005 / \sigma_{t,eff}^2)$ similar to other annealed MCMC posterior sampling methods \cite{zhu2409think}, which worked well across all experiments.

To compute $\log p_i(x)$ for the evidence gradient, we used a fixed 200 integration steps with the same schedule as above regardless of how many annealing iterations were used to reduce integration bias. For our EM M-step, given gradient $g:=\nabla_a \log p(y)$, we update the current exponents with the normalized gradient $a \gets \eta \frac{g}{\|g\| + c}$. For the Gaussian examples, we used $\eta = 0.5$ and $c = 200$. For the black hole imaging, we used $\eta=0.5$ and $c=3000$. For the Stable Diffusion experiments, we used $\eta=2.0$ and $c=50$: empirically, for a fixed latent, the difference in log-probabilities between different conditions is really small.

\subsubsection{Speeding up inference for Stable Diffusion experiments}
With a greater required number of mixing steps, computing $\nabla_{z_t} \log p(y \mid D(\mu(z_t)))$ can become computationally expensive due to having to differentiate through the diffusion model and the decoder $D$ at every step. As a result, we make the approximation:
\begin{align}
    \nabla_{z_t} \log p(y \mid D(\mu(z_t)))
    &= \left(\frac{\partial \mu(z_t)}{\partial z_t}\right)^{T}
       \nabla_{\mu} \log p (y \mid D(\mu(z_t))) \\
    &\approx \nabla_{\mu} \log p (y \mid D(\mu(z_t)))
\end{align}
where we approximate the Jacobian as identity. As $t \to 0$, we have $\mu(z_t) \to z_t$ so the approximation disappears at low noise. Therefore, we get the surrogate mixing score $\nabla_{z_t} \log \pi_a(z_t) + \beta_t \nabla_\mu \log p(y \mid D(\mu(z_t)))$, which also induces a path of marginals that also reaches the target posterior as $t \to 0$. We find that this approximation can speed up each Stable Diffusion mixing iteration by around $3\times$ with no cost to posterior sample quality or difference in EM optimum. This approximation also does not affect the posterior sample quality, EM convergence or evidence field landscape for the pixel-space black hole models, but results in less speedup (around $1.5 \times$).

\newpage
\subsection{Baseline methods on a single measurement}
In this section, we demonstrate that existing diffusion model finetuning approaches \cite{bai2024expectation, rozet2024learning} do not work with only a single measurement, first with a 2D Gaussian example, and then a real diffusion model example. These methods alternate between sampling the current posterior (E-step) and using these posterior samples to finetune the current prior (M-step).  

\subsubsection{2D Gaussian}
In this section, we use a 2D Gaussian mixture prior and linear forward model $y = Ax + \varepsilon$, with $A\in\mathbb{R}^{1 \times 2}$. This setting gives us the ability to perform exact posterior sampling, fine-tuning and visualization. To simulate exact "fine-tuning" on the posterior, we use the computed analytic posterior density as the next prior, which also helps to avoids any finite-sample issues. As can be seen in Figure \ref{fig:gaussian-finetune}, the expressive initial prior overfits and ultimately collapses to the likelihood adjusted for measurement noise. Due to this measurement noise, the ground truth image actually has low probability under this converged prior, indicating that this approach is not suitable for a single measurement. Furthermore, there is still large uncertainty in the prior in the null space direction of $A$.

\begin{figure}
    \centering
    \includegraphics[width=1.0\linewidth]{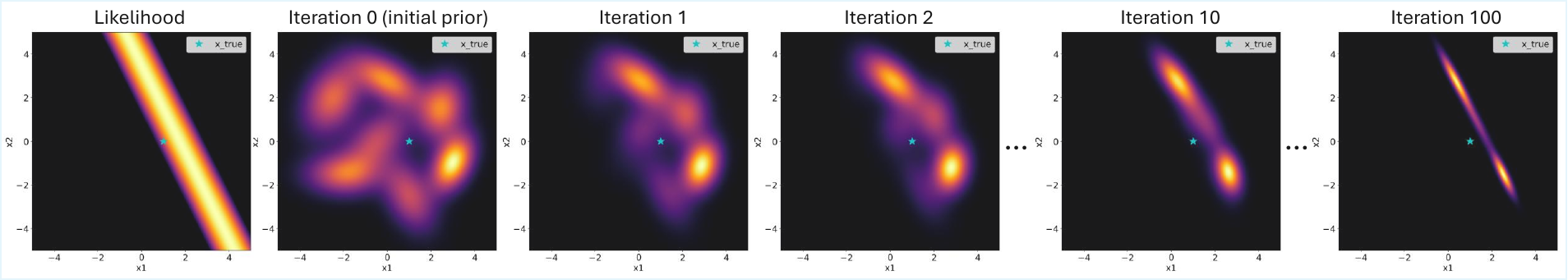}
    \caption{Baseline finetuning method using a single measurement and a linear forward model. Likelihood $p(y \mid x)$ is shown on the left. Starting from a Gaussian mixture prior at iteration $0$, after many EM iterations the prior collapses onto  the likelihood, indicating overfitting to the measurement.}
    \label{fig:gaussian-finetune}
\end{figure}

\subsubsection{Diffusion EM}
Now we run the same ablation study but with the pre-trained GRMHD prior as our initialization, and the M87* black hole observations as our measurements. At every EM iteration, we generate 3000 posterior samples using the current prior and then finetune the diffusion model using those samples for 10000 steps; each iteration took 1.5 hours. Results are shown in Figure \ref{fig:diffusion-em-ablation}. Similar to the toy example, the prior represented by the diffusion model overfits to the M87* likelihood over time, and it becomes narrower at each iteration, with the unconditional samples looking nearly identical at iteration 3 (bottom row).

\begin{figure}[H]
    \centering
    \includegraphics[width=0.9\linewidth]{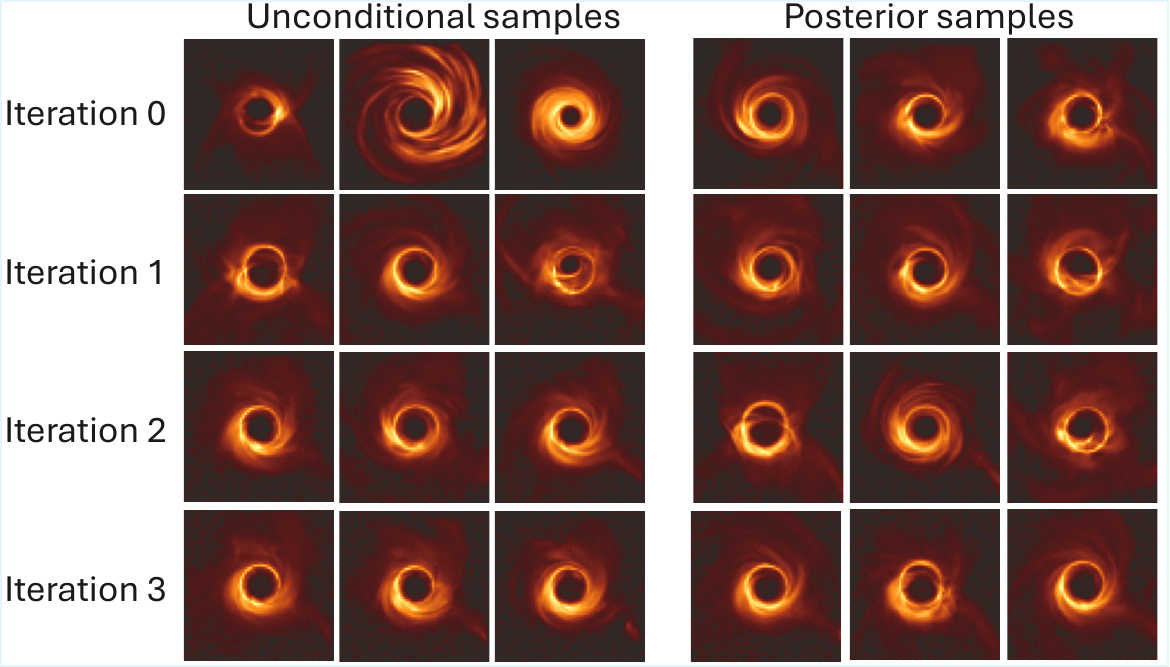}
    \caption{Diffusion finetuning approaches from a single observation of the M87* black hole. As expected, the diffusion prior collapses onto the M87* likelihood and gets narrower at every iteration, to the point where all the unconditional samples at iteration 3 are nearly the same, making it an overfitted prior.}
    \label{fig:diffusion-em-ablation}
\end{figure}

\newpage
\subsection{Ablation of weighting in evidence field optimization}
In Section 3.2 we describe weighting the least squares by $w^{(i)}_{m,n} \propto \frac{1}{|g^{(i)}_{m,n}|^2}$, as the variance of Monte Carlo noise scales with the squared norm of the estimate. We test different weightings $w^{(i)}_{m,n} \propto \frac{1}{|g^{(i)}_{m,n}|^p}$, with $p\in[0,1,2,3,4,5,6]$ and the same Gaussian example described in Section 4.1. NRMSE to the ground truth field is displayed in Figure \ref{fig:field-ablation-metrics}. At $p=2$, the metric is minimized for in-distribution and slightly OOD y, while still having low error for OOD $y$ (which is minimized at $p=4$). Reconstructions of different weightings $p$ are displayed in Figure \ref{fig:field-ablation-visual}. If $p$ is too high, the evidence field gets overfit to gradient noise, as seen in the right hand side. As a result, $p=2$ is both theoretically and empirically justifified.

\begin{figure}
    \centering
    \includegraphics[width=0.7\linewidth]{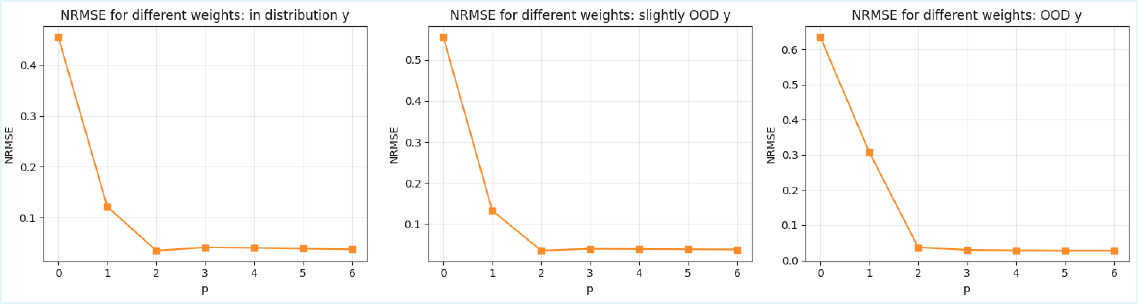}
    \caption{NRMSE for different $p$ in the least squares weighting $w^{(i)}_{m,n} \propto \frac{1}{|g^{(i)}_{m,n}|^p}$. The error metrics are good for $p \geq 2$, with $p=2$ being the best for the in-distribution and slightly OOD $y$, and $p=4$ being the best for OOD $y$.}
    \label{fig:field-ablation-metrics}
\end{figure}

\begin{figure}
    \centering
    \includegraphics[width=0.9\linewidth]{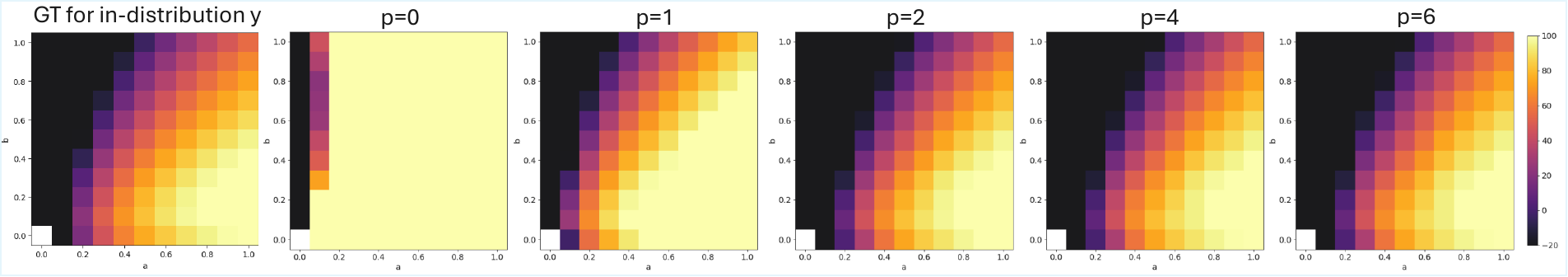}
    \includegraphics[width=0.9\linewidth]{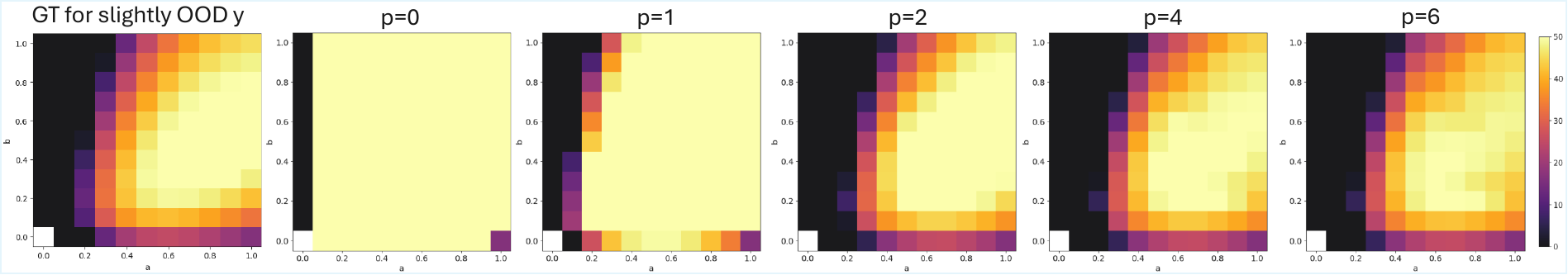}
    \includegraphics[width=0.9\linewidth]{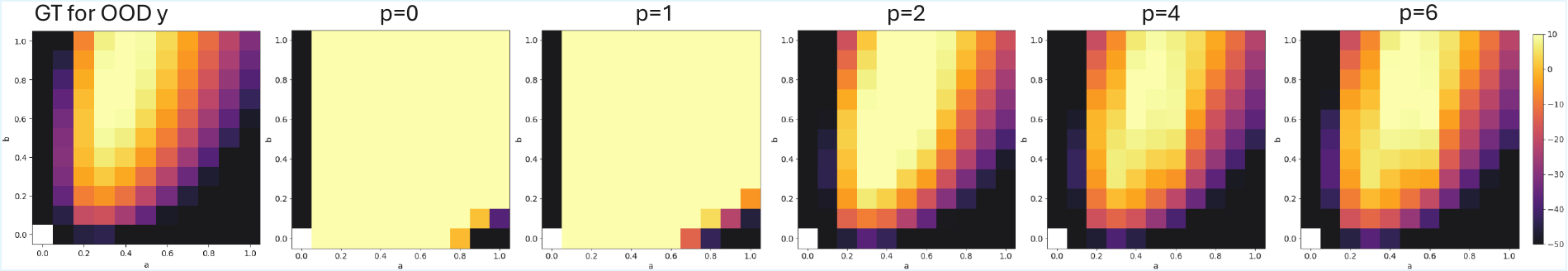}
    \caption{Evidence field reconstruction for different least squares weightings $p$. When $p<2$, the least squares puts too much weight in the areas with high gradients (left side) and reconstructs the rest poorly. When $p$ is too large, it starts overfitting to the gradient noise in flat regions.}
    \label{fig:field-ablation-visual}
\end{figure}

\newpage
\subsection{More EM trajectories with Stable Diffusion}
In this section, we show other EM trajectories of the Stable Diffusion experiments described in Section 4.3. In Figure \ref{fig:dog-em-many}, we use the golden corgi example and in Figure \ref{fig:church-em-many}, we use the Sacre Coeur example. For the same set of priors, all EM trajectories converge to the same exponents. Even for a slightly different set of priors, where the unconditional is missing (Figure \ref{fig:dog-em-many}, right) we obtain a similar set of exponents.

\begin{figure}
    \centering
    \includegraphics[width=0.9\linewidth]{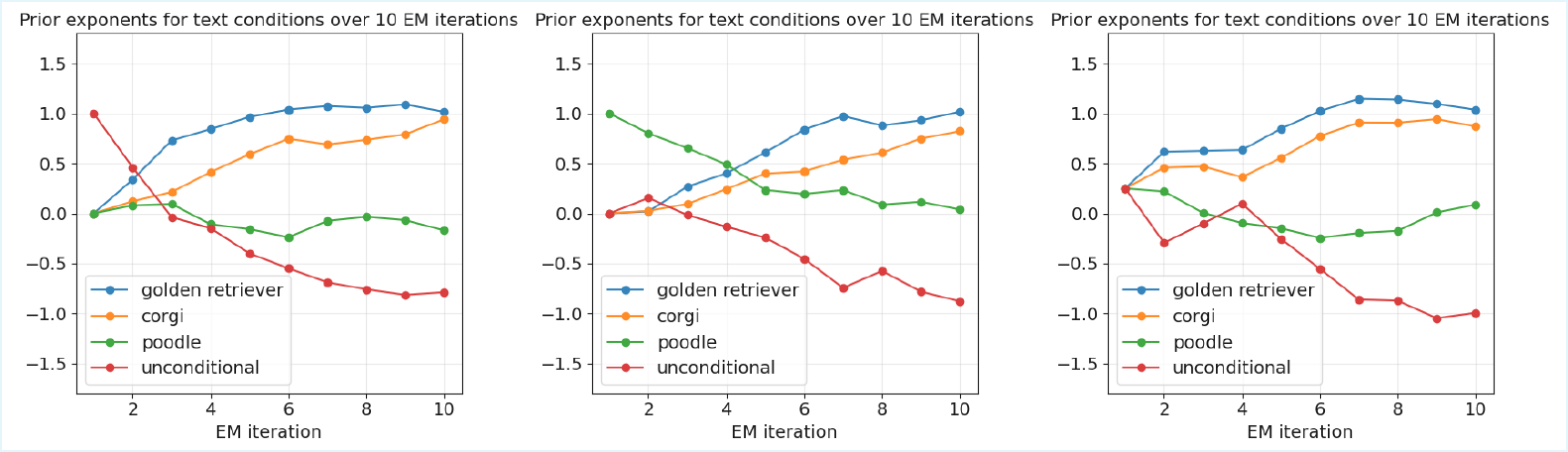}
    \caption{EM trajectories of different initializations for the golden corgi experiment. All trajectories converge to a high weight on the "golden retriever" and "corgi" text conditions, and no weight on the "poodle" condition.}
    \label{fig:dog-em-many}
\end{figure}

\begin{figure}
    \centering
    \includegraphics[width=0.9\linewidth]{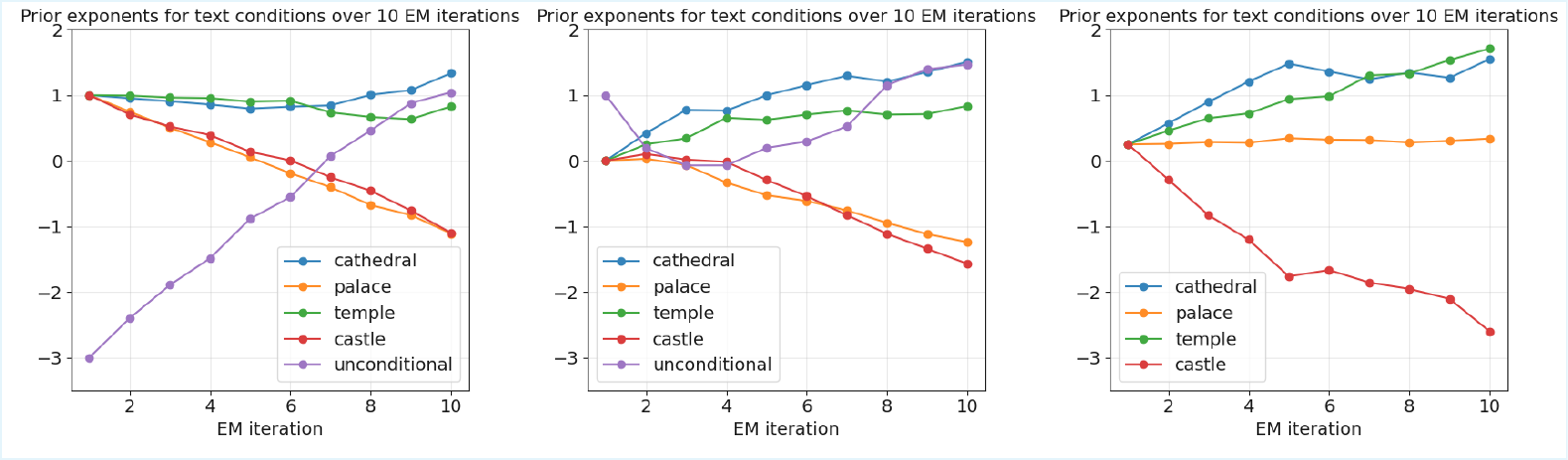}
    \caption{\textbf{Left, middle:} EM trajectories of different initializations for the Sacre Coeur experiment. Both trajectories converge to a high weight on the "cathedral" and "temple" text conditions, and negative weight on the "palace" and "castle" conditions. \textbf{Right:} when the unconditional prior is removed from the set of priors, we still converge to a high weight on the  "cathedral" and "temple" conditions and a negative weight on the "castle" condition. The unconditional prior becomes absorbed into the "palace" condition.} 
    \label{fig:church-em-many}
\end{figure}



\end{document}